\documentclass{article}

\usepackage[preprint]{neurips_2026}

\usepackage{amsmath,amssymb,amsthm,mathtools}
\usepackage{bm}
\usepackage[utf8]{inputenc} 
\usepackage[T1]{fontenc}    
\usepackage{hyperref}       
\usepackage{url}            
\usepackage{booktabs}       
\usepackage{amsfonts}       
\usepackage{nicefrac}       
\usepackage{microtype}      
\usepackage{xcolor}         
\usepackage{multirow}
\usepackage{fancybox}
\usepackage{enumitem}
\usepackage{algorithm}
\usepackage{algpseudocode}
\usepackage{caption}
\usepackage{subcaption}
\usepackage{cleveref}

\theoremstyle{plain}
\newtheorem{theorem}{Theorem}
\newtheorem{proposition}[theorem]{Proposition}
\newtheorem{lemma}[theorem]{Lemma}
\newtheorem{corollary}[theorem]{Corollary}
\theoremstyle{definition}
\newtheorem{definition}[theorem]{Definition}

\theoremstyle{remark}
\newtheorem{remark}[theorem]{Remark}

\newcommand{\R}{\mathbb{R}}

\newcommand{\diag}{\operatorname{diag}}

\newcommand{\calG}{\mathcal{G}}
\newcommand{\calE}{\mathcal{E}}

\renewcommand{\vec}{\operatorname{vec}}

\title{Improving Spatio-Temporal Residual Error Propagation by Mitigating Over-Squashing}

\author{%
  Seyed Mohamad Moghadas$^{1,2}$, Esther Rodrigo Bonet$^{1,2}$, Bruno Cornelis, Adrian Munteanu$^{1,2}\thanks{Senior IEEE Member}$\\
    $^{1}$ ETRO Department, Vrije Universiteit Brussel, B-1050 Brussels, Belgium \\
  $^{2}$ imec Kapeldreef 75, B-3001 Leuven, Belgium\\
  \texttt{Seyed.Mohamad.Moghadas, Esther.Rodrigo.Bonet,Adrian.Munteanu@vub.be} \\
}

\begin{document}

\maketitle


\begin{abstract}
Residual error propagation remains a fundamental problem in recurrent models, where small prediction inaccuracies compound over time and degrade long-horizon performance. Accurately modeling the correlation structure of such residuals is critical for reliable uncertainty quantification in probabilistic multivariate time-series forecasting. While recent time-series deep models efficiently 
parametrize time-varying \textit{contemporaneous} correlations, they often assume temporal independence of errors and neglect spatial correlation across the observed network. In this paper, we introduce Teger, a structured uncertainty module that overcomes the spatial and temporal limitations of error-correlated autoregressive forecasting. Teger proposes a spatial curvature-aware graph rewiring mechanism explicitly strengthening information-bottleneck edges identified by discrete Forman curvature. The component is integrated into a low-rank-plus-diagonal covariance head, preserving tractable inference via the Woodbury identity. Teger is backbone-agnostic, requiring only the latent state produced by any autoregressive encoder. We provide theoretical evidence of Teger, and experimentally evaluate it on LSTM, Transformer, and xLSTM
backbones across four real-world spatio-temporal datasets, showing consistent improvement in Continuous Ranked Probability Score (CRPS). We further provide a formal theoretical analysis connecting curvature-aware
rewiring to (i)~oversquashing alleviation, (ii)~improved spectral
connectivity, (iii)~reduced effective resistance, and (iv)~improved
covariance calibration bounds.~\href{https://anonymous.4open.science/r/Teger-D6F1/}{Repo}
\end{abstract}


\section{Introduction}
\label{sec:intro}
With the rapid development of Intelligent Transportation Systems (ITS), traffic forecasting has become a fundamental component of modern urban infrastructure. Accurate forecasting enables informed traffic management, congestion mitigation, and route planning~\cite{li2018diffusion}. Recent advances in deep learning have significantly improved the modeling of  complex spatio-temporal dependencies in ITS, particularly through graph neural networks (GNNs). 
 Yet, most approaches focus primarily on predicting point estimates~\cite{Benidis_2022, Song_Lin_Guo_Wan_2020, ijcai2019p264, Zheng_Fan_Wang_Qi_2020, salinas2019highdimensional, shang2021discrete, nie2023time}, while modeling uncertainty remains comparatively underexplored. In practice, empirical evidence reveals pronounced spatiotemporal covariance patterns in traffic residuals: predictions at one sensor can be strongly influenced by upstream and downstream conditions, leading to correlated residuals across the network (Figure~\ref{fig:sub2}). Despite this, existing methods rarely capture such structure. And even when covariance is modeled, it is typically restricted to the temporal dimension, thereby failing to fully capture their spatial interaction. For instance, TCVAR~\cite{zheng2024multivariateprobabilistictimeseries} explicitly models the temporal covariance, but does not extend this modeling across space.  Beyond covariance structure, these models also struggle to explicitly capture volatility, leading to miss-calibrated uncertainty estimates~\cite{UQTFF}. On top, most methods are restricted to specific autoregressive
backbones, limiting in flexibility and applicability.

The inability to capture spatio-temporal covariance is intrinsically linked to the over-squashing phenomenon in GNNs, where long-range dependencies are excessively compressed into bounded representations, degrading information propagation across the traffic graph. In recent years, OQ has received increasing attention in the graph learning community~\cite{topping2022understandingoversquashingbottlenecksgraphs,alon2021bottleneck}, with works focusing on classical tasks such as node classification and edge prediction. Two families of remedies, both operating inside the message-passing layer of GNNs, are now well-established. \emph{Curvature-based rewiring}, introduced by SDRF~\cite{topping2022understandingoversquashingbottlenecksgraphs} and subsequently refined, identifies bottleneck edges through discrete Ricci-style curvature and adds or removes edges to relieve them. Alternatively, \emph{sequential structural rewiring}, exemplified by LASER~\cite{barbero2024laser}, constructs a sequence of augmented graph snapshots that add long-range shortcuts while explicitly trading off connectivity, locality, and sparsity. 

\cite{mariscaover} extends the notion to spatio-temporal networks and establishes a theoretical foundation. 
Counterintuitively, their analysis shows that convolutional spatiotemporal graph neural networks (STGNNs) favor information propagation from temporally distant nodes over closer ones. However, they do not formalize the effect of OQ mitigation on predictive performance, and neither their work nor prior work explores it in the context of dynamic graphs, to the best of our knowledge. However, we argue that spatio-temporal forecast errors do not occur in isolation: they propagate across the network's dynamic structure following its underlying topology, and this propagation is both time-varying and influenced by latent system states (Fig.~\ref{fig:main}). Capturing this structure is, at heart, a covariance-modeling problem on a dynamic graph, and the same bottlenecks that distort message passing also distort the spatial covariance of residuals---concentrating dependence along thin chains and underestimating cross-region correlation. In this work, we address these gaps through a backbone-agnostic, parametric, probabilistic framework for spatio-temporal networks such as road graphs. We provide a theoretical analysis linking OQ alleviation to improved spectral connectivity and reduced effective resistance, and show how these properties lead to enhanced predictive performance by mitigating residual error propagation.


We propose \textbf{Teger}, a backbone-agnostic probabilistic framework for
spatio-temporal forecasting on dynamic graphs that places a curvature-aware
spatial covariance inside a low-rank-plus-diagonal noise head. In contrast
to LASER~\cite{barbero2024laser}, which \emph{adds} edges via a sequence of
augmented snapshots, and to SDRF~\cite{topping2022understandingoversquashingbottlenecksgraphs},
which adds and removes edges based on Balanced Forman curvature, Teger keeps
the input topology fixed and only \emph{reweights} existing edges using the
same Balanced Forman signal. This is, deliberately, a weaker structural
intervention: it cannot create long-range pathways where none exist, but it
preserves locality and sparsity by construction, sidesteps the
connectivity-locality-sparsity trade-off that motivates LASER's sequential
design, and---most importantly for our setting---admits a closed-form
low-rank projection that integrates cleanly into a Woodbury-style covariance
update. The contribution of Teger is therefore a new curvature-aware reweighting  algorithm and at the same time a probabilistic-forecasting counterpart to the
existing OQ-mitigation literature. 

Our contributions are four-fold: (1) A dynamic low-rank-plus-diagonal covariance model for
    spatio-temporal residuals, capturing spatial dependence through
    graph-aware latent factors and temporal dependence through an
    autoregressive structure on the residual process. (2) A curvature-based spatial correlation mechanism modeling error diffusion on the dynamic graph, with state-aware dynamics and volatility adaptation.
  (3) A curvature-aware edge-reweighting step embedded inside the
    covariance head, together with a learnable low-rank projection that maps
    the rewired Laplacian into an $R$-dimensional factor space at
    $\mathcal{O}(R^3)$ rather than $\mathcal{O}(N^3)$ cost. To our
    knowledge, this is the first integration of curvature-based rewiring
    into a probabilistic spatio-temporal forecaster.
  (4) Integration of these components into a unified probabilistic
    autoregressive framework that preserves computational tractability while
    materially expanding the expressiveness of spatio-temporal uncertainty
    modeling.

We are explicit about the scope of the OQ-mitigation claim: edge reweighting is a strictly weaker tool than the structural rewirings of LASER and SDRF, and it can relieve over-squashing only in regimes where bottleneck edges are present but underweighted---a regime we argue is well-matched to road-network forecasting, where the physical road graph is essentially fixed but edge importances vary in time. We discuss when this regime applies in Section~\ref{sec:brussels_usecase}.

The paper is structured as follows. Section 2 overviews the related work. Section 3 presents the proposed methodology and performance figures used to compare the proposed spatial curvature-aware graph reweighting mechanism against the state of the art. Section 4 reports and analyses the experimental results. Section 5 draws the conclusions.

\section{Related Work}





\paragraph{Neural time-series forecasting.}
Traffic forecasting with deep learning is a rapidly developing research area~\cite{Benidis_2022}. While spatial (CNN-based)~\cite{Song_Lin_Guo_Wan_2020, ijcai2019p264} and recurrent (RNN-based~\cite{Zheng_Fan_Wang_Qi_2020} and LSTM-based models~\cite{salinas2019highdimensional, shang2021discrete}) models have been proposed, we review transformer-based models due to their powerful ability in extracting spatio-temporal features and their effectiveness in few-shot learning scenarios.
Inspired by the vanilla transformer mechanism introduced in~\cite{vaswani2023attention}, PatchTST \cite{nie2023time} highlights the importance of extracting local semantic information from time series data, often overlooked by models that use point-wise tokens and then re-process the output of a transformer for a point forecast or a probabilistic forecast. Other works build on the transformer architecture, and propose alternative strategies to the vanilla attention mechanism~\cite{moghadas2022statnet},
leading to models better tailored for time-series forecasting~\cite{lim2020temporal, oreshkin2020nbeats, zhou2021informer}. For instance, ~\cite{moghadas2022statnet} proposes a spatio-temporal attention  model, others leverage trend-aware decomposition in the definition of the attention layer~\cite{wu2022autoformer, woo2022etsformer, liu2022pyraformer, zhou2022fedformer, ni2024basisformer, li2020enhancing}. On the other hand, copula-based transformer models~\cite{ashok2024tactis2}, are better at capturing volatilities, defined as the degree of variation in the time-series value over time. 

\paragraph{Correlated errors in probabilistic forecasting.}
Recent research in multivariate probabilistic forecasting~\cite{zheng2024multivariateprobabilistictimeseries} has highlighted the limitations of one-step-ahead independent likelihood training, which ignores serial correlations in forecast errors. A tractable remedy is to introduce a latent low-dimensional error process and correlate it within a finite horizon using a dynamic correlation matrix (e.g., kernel-mixture parameterizations)~\cite{zheng2024multivariateprobabilistictimeseries}. This constructs a block covariance over a temporal window while maintaining efficient likelihood computation using the Woodbury identity~\cite{MR1927606} and the matrix determinant lemma. {These methods handle temporal correlations only within the residuals, whereas our method additionally captures spatial dependencies across them.} 

\begin{figure}[!t]
\centering
\includegraphics[width=0.8\textwidth]{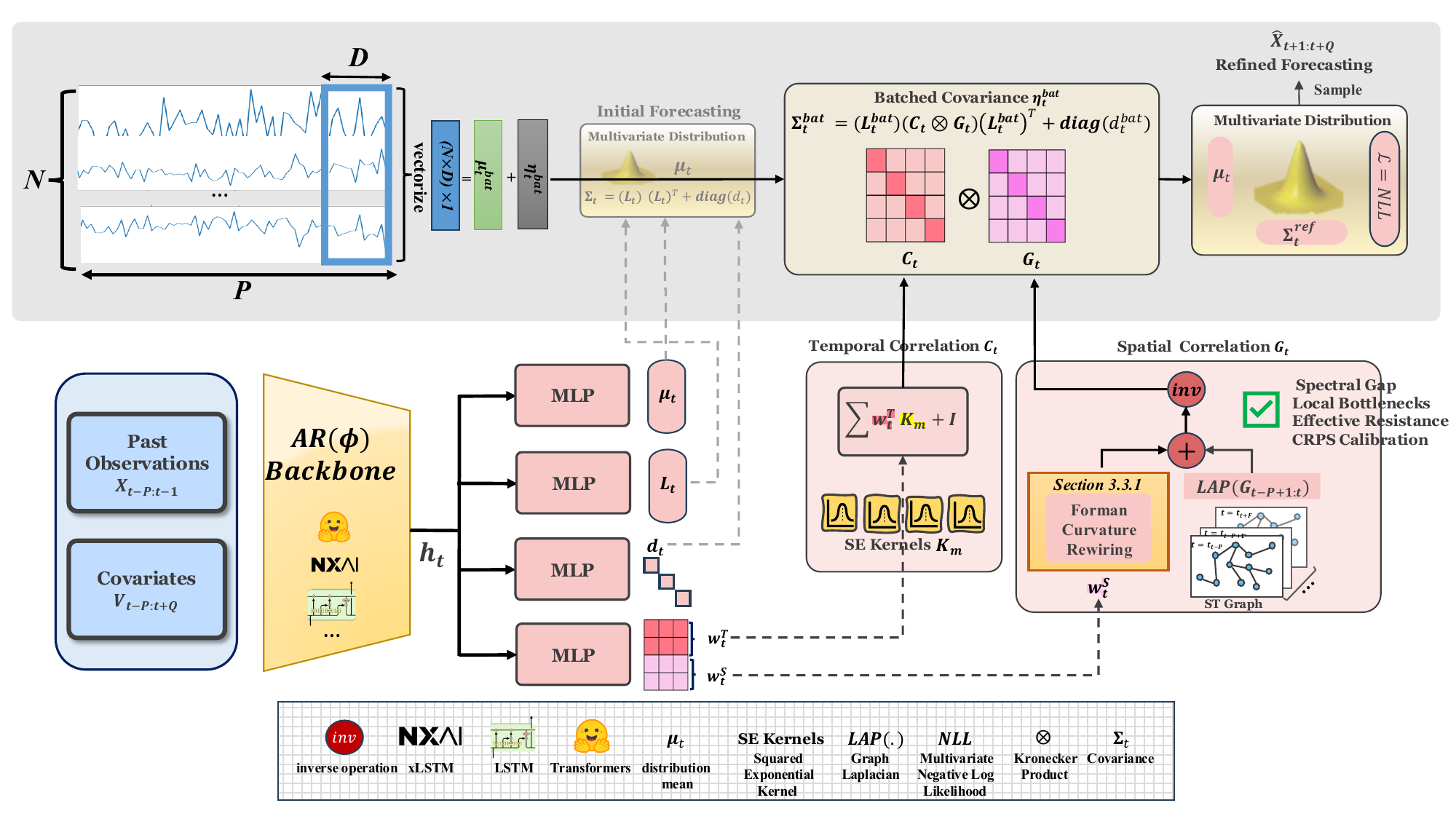}
\caption{Overview of Teger. An autoregressive backbone encodes past observations into a latent state
$\mathbf{h}_t$. MLP heads produce the predictive mean and an initial low-rank-plus-diagonal (LRD) covariance. The temporal kernel
mixture $\mathbf{C}_t$ and graph-based spatial factor covariance $\boldsymbol{G}_t$ are combined into a Kronecker batch covariance, which enables inference-time residual refinement and yields calibrated predictions.}
\label{fig2}
\end{figure}

\section{Methodology}
We pose our probabilistic auto-regressive formulation in Section 3.1, then describe the temporal and spatial error correlation modeling in Sections 3.2 and 3.3. Section 3.4 introduces theoretical performance figures for curvature aware rewiring. Section 3.5 details the sampling performed by the proposed method at inference time. See Fig.~\ref{fig2} for an visual representation of our approach. 
\label{sec:method}

\subsection{Probabilistic Autoregressive Formulation}
\label{ssec:prob-ar}

Let $\mathbf{x}_t = [x_{1,t},\ldots,x_{N,t}]^\top \in \R^N$ denote the
vector of multivariate time series variables at time step $t$, where $N$ is
the number of spatially distributed nodes. The objective of probabilistic
spatio-temporal forecasting is to estimate the joint conditional distribution
$p(\mathbf{x}_{T+1:T+Q} \mid \mathbf{x}_{T-P+1:T};,
    \mathbf{v}_{T-P+1:T+Q}),$
given the observed history $\{\mathbf{x}_t\}_{t=1}^T$ and time-dependent
covariates $\mathbf{v}_t$. Our methodology employs a multivariate Gaussian distribution as the
probabilistic head, decomposing the time series variable as
$\mathbf{x}_t = \boldsymbol{\mu}_t + \boldsymbol{\eta}_t$, where
$\boldsymbol{\eta}_t \sim \mathcal{N}(\mathbf{0}, \Sigma_t)$ represents
the stochastic error component. To efficiently model the covariance matrix
$\Sigma_t$ for large $N$, we adopt a low-rank-plus-diagonal parameterisation:
\begin{equation}
  \Sigma_t = \mathbf{L}_t\mathbf{L}_t^\top + \diag(\mathbf{d}_t),
  \label{eq:lrd}
\end{equation}
where $\mathbf{L}_t \in \R^{N \times R}$ ($R \ll N$) and
$\mathbf{d}_t \in \R^N_+$.

\subsection{Temporal Error Correlation Modeling}
\label{ssec:temporal}

To capture spatio-temporal dependencies, we extend error modelling over a
finite horizon $D$ by stacking residuals:
$\boldsymbol{\eta}_t^{\mathrm{bat}} = \vec(\boldsymbol{\eta}_{t-D+1:t}) \in
\R^{DN}$. The joint distribution is:
\begin{equation}
  \boldsymbol{\eta}_t^{\mathrm{bat}} \sim
  \mathcal{N}(\mathbf{0}, \Sigma_t^{\mathrm{bat}}),
  \quad
  \Sigma_t^{\mathrm{bat}} =
    \mathbf{L}_t^{\mathrm{bat}}
    (\mathbf{C}_t \otimes \mathbf{G}_t)
    (\mathbf{L}_t^{\mathrm{bat}})^\top
    + \diag(\mathbf{d}_t^{\mathrm{bat}}),
  \label{eq:batch-cov}
\end{equation}
where $\mathbf{L}_t^{\mathrm{bat}} = \mathrm{blkdiag}(\mathbf{L}_{t-D+1},
\ldots, \mathbf{L}_t) \in \R^{DN \times DR}$, $\mathbf{C}_t \in \R^{D \times D}$
captures temporal correlation, and $\mathbf{G}_t \in \R^{R \times R}$ encodes
spatial factor covariance informed by the graph structure.

The temporal correlation matrix $\mathbf{C}_t$ is parameterised as a dynamic
kernel mixture, similar to TCVAR:
\begin{equation}
  \mathbf{C}_t = \sum_{m=1}^M w_{m,t} \mathbf{K}_m,
  \quad w_{m,t} \ge 0,\quad \sum_{m=1}^M w_{m,t} = 1,
  \label{eq:kernel-mix}
\end{equation}
where $\{\mathbf{K}_m\}$ are fixed Positive Semi-Definite (PSD) kernel
matrices (e.g., squared exponential with varying length scales) and
$\{w_{m,t}\}$ are time-varying mixture weights learned through a softmax
network.

\subsection{Spatial Correlation Modeling}
\label{sec:spatial}

Motivated by the limitations of existing models such as TCVAR in capturing spatial error propagation, Teger proposes a novel paradigm for spatial error propagation that makes use of a stochastic curvature-aware reweighting 
approach applicable for ST networks.

\subsubsection{Dynamic Curvature-Aware Spatial Correlation}
\label{sssec:curvature}

To incorporate spatial dependencies inherent in transportation networks, we introduce a dynamic graph  $\calG_t = (\mathcal{V}, \calE_t, \mathbf{A}_t)$. We consider eigenvalues of the graph Laplacian to be ordered as: $\lambda_n \ge \lambda_{n-1} \ge ... \ge \lambda_1$.
A critical challenge in cross-correlation methods applied to transportation
systems is the \emph{oversquashing} phenomenon, where information flow
between distant nodes becomes exponentially attenuated through bottleneck
edges with highly negative discrete curvature. Based on the theorem from ~\cite{topping2022understandingoversquashingbottlenecksgraphs}, edges $(i,j)$ with Balanced
Forman curvature $\kappa(i,j) \le -2+\delta$ (for small $\delta > 0$) act as
information bottlenecks, severely limiting the influence propagation from
node $i$ to its extended neighborhood.
To mitigate this, we propose the curvature-aware spatial precision matrix
rewiring algorithm (Algorithm~\ref{alg:rewiring}).

\begin{algorithm}[t]
\caption{Curvature-Aware Spatial Precision Matrix Rewiring}
\label{alg:rewiring}
\begin{algorithmic}[1]
  \State \textbf{Input:} Batch adjacency matrices $\{\bar{\mathbf{A}}_{b,t}\}_{b=1}^B$,
    fixed jitter $\sigma_{\min}>0$
  \State \textbf{Learned parameters:} $\kappa_0\in\mathbb{R}$, $\tau>0$, $\lambda>0$,
    $\alpha>0$, $\beta\ge0$, $\mathbf{P}\in\mathbb{R}^{N\times R}$
    \quad\Comment{All trained end-to-end via back-prop}
  \State \textbf{Output:} Spatial factor covariance $\mathbf{G}_t\in\mathbb{R}^{R\times R}$
  \State $\mathbf{A}_t \leftarrow \frac{1}{B}\sum_{b=1}^B \bar{\mathbf{A}}_{b,t}$
  \State $\mathbf{W}_t \leftarrow \tfrac{1}{2}(\mathbf{A}_t + \mathbf{A}_t^\top)$
  \For{each edge $(i,j)$ in $\mathbf{W}_t$}
    \State Compute Balanced Forman curvature $\kappa_{ij,t}$
    \State $b_{ij,t} \leftarrow \operatorname{softplus}(\tau(\kappa_0 - \kappa_{ij,t}))$
  \EndFor
  \For{each edge $(i,j)$ in $\mathbf{W}_t$}
    \State $W'_{ij,t} \leftarrow W_{ij,t}(1 + \lambda\, b_{ij,t})$
  \EndFor
  \State Compute graph Laplacian $\mathbf{L}'_t$ from $\mathbf{W}'_t$
  \State $\hat{\mathbf{P}} \leftarrow \operatorname{colnorm}(\mathbf{P})$;\quad
    $\mathbf{L}_t^{(R)} \leftarrow \hat{\mathbf{P}}^\top \mathbf{L}'_t \hat{\mathbf{P}}$
  \State $\mathbf{Q}_t \leftarrow (\alpha +\sigma_{\min}) \mathbf{I}_R + \beta\mathbf{L}_t^{(R)}$;\quad
    $\mathbf{G}_t \leftarrow \mathbf{Q}_t^{-1}$
  \State \Return $\mathbf{G}_t$
\end{algorithmic}
\end{algorithm}

\begin{proposition}[Validity of Reweighted Covariance]
\label{prop:validity}
Assume the reweighted adjacency $\mathbf{W}'_t$ is symmetric and entrywise
nonnegative. Then:
\begin{enumerate}
  \item[(i)] The graph Laplacian $\mathbf{L}'_t$ of $\mathbf{W}'_t$ is PSD.
  \item[(ii)] For any $\mathbf{P} \in \R^{N \times R}$, the projected matrix
    $\mathbf{L}_t^{(R)} = \mathbf{P}^\top \mathbf{L}'_t \mathbf{P}$ is PSD.
  \item[(iii)] For $\alpha > 0$, $\beta \ge 0$, and $\sigma_{\min} > 0$, the
    precision matrix $\mathbf{Q}_t = \alpha\mathbf{I}_R
    + \beta\mathbf{L}_t^{(R)} + \sigma_{\min}\mathbf{I}_R$ is symmetric
    positive definite, and hence $\mathbf{G}_t = \mathbf{Q}_t^{-1}$ is a
    valid covariance matrix.
\end{enumerate}
\end{proposition}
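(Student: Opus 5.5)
We prove the three parts in order, each building on the previous one. All three rest on the quadratic-form characterization of positive semidefiniteness: a symmetric matrix $\mathbf{M}$ is PSD iff $\mathbf{x}^\top\mathbf{M}\mathbf{x}\ge 0$ for all $\mathbf{x}$.

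For (i), take $\mathbf{L}'_t = \mathbf{D}'_t - \mathbf{W}'_t$ with degree matrix $\mathbf{D}'_t = \diag(\mathbf{W}'_t\mathbf{1})$. Symmetry of $\mathbf{L}'_t$ follows from symmetry of $\mathbf{W}'_t$. We then expand the quadratic form and use $W'_{ij,t}=W'_{ji,t}$ to regroup terms:
\begin{equation*}
\mathbf{x}^\top\mathbf{L}'_t\mathbf{x}
=\sum_i D'_{ii}x_i^2-\sum_{i,j}W'_{ij,t}x_ix_j
=\tfrac12\sum_{i,j}W'_{ij,t}(x_i-x_j)^2 .
\end{equation*}
Every summand is nonnegative because $W'_{ij,t}\ge 0$, so the form is nonnegative and $\mathbf{L}'_t$ is PSD.

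If the implementation instead uses the symmetric normalized Laplacian $\mathbf{I}-\mathbf{D}'^{-1/2}_t\mathbf{W}'_t\mathbf{D}'^{-1/2}_t$, the same argument applies after substituting $\mathbf{y}=\mathbf{D}'^{-1/2}_t\mathbf{x}$. Isolated nodes (zero degree) are handled by the usual convention of setting $D'^{-1/2}_{ii}=0$.

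For (ii), the projected matrix $\mathbf{P}^\top\mathbf{L}'_t\mathbf{P}$ is symmetric because $\mathbf{L}'_t$ is. For any $\mathbf{z}\in\R^R$, set $\mathbf{x}=\mathbf{P}\mathbf{z}$; then
\begin{equation*}
\mathbf{z}^\top\mathbf{P}^\top\mathbf{L}'_t\mathbf{P}\mathbf{z}=\mathbf{x}^\top\mathbf{L}'_t\mathbf{x}\ge 0
\end{equation*}
by (i). This holds for arbitrary $\mathbf{P}$, and in particular for the column-normalized $\hat{\mathbf{P}}$ used in Algorithm~\ref{alg:rewiring}.

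For (iii), $\mathbf{Q}_t$ is a sum of symmetric matrices, hence symmetric. For $\mathbf{z}\neq 0$,
\begin{equation*}
\mathbf{z}^\top\mathbf{Q}_t\mathbf{z}=(\alpha+\sigma_{\min})\|\mathbf{z}\|^2+\beta\,\mathbf{z}^\top\mathbf{L}_t^{(R)}\mathbf{z}\ge(\alpha+\sigma_{\min})\|\mathbf{z}\|^2>0,
\end{equation*}
using $\beta\ge0$ and (ii). So $\mathbf{Q}_t$ is symmetric positive definite, and its smallest eigenvalue is at least $\alpha+\sigma_{\min}$.

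It follows that $\mathbf{Q}_t$ is invertible. Its inverse $\mathbf{G}_t$ is symmetric, and its eigenvalues are the reciprocals of those of $\mathbf{Q}_t$, so they are positive. Hence $\mathbf{G}_t$ is SPD and a valid covariance matrix, with $\|\mathbf{G}_t\|_2\le 1/(\alpha+\sigma_{\min})$.

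No step is a genuine obstacle. The one point needing care is the hypothesis in (i) that $\mathbf{W}'_t$ is entrywise nonnegative. This holds because $\mathbf{W}_t$ is an average of symmetrized nonnegative adjacencies and the reweighting factor $1+\lambda b_{ij,t}$ is positive, since softplus is positive and $\lambda>0$. The result is also insensitive to which Laplacian variant is used.
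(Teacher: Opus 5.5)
Your proof is correct and matches the paper's argument essentially step for step: the identity $\mathbf{x}^\top\mathbf{L}'_t\mathbf{x}=\tfrac12\sum_{i,j}W'_{ij,t}(x_i-x_j)^2$ for (i), the substitution $\mathbf{x}=\mathbf{P}\mathbf{z}$ for (ii), and the lower bound $(\alpha+\sigma_{\min})\|\mathbf{z}\|^2$ for (iii). Your additions are correct but not needed for the statement; these are the explicit symmetry checks, the normalized-Laplacian aside, and the bound $\|\mathbf{G}_t\|_2\le 1/(\alpha+\sigma_{\min})$.
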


The proof is given in Appendix~\ref{app:prop1}.  Proposition~\ref{prop:validity}
establishes only that the construction yields a well-defined covariance matrix. We now provide a theoretical justification connecting the curvature-aware rewiring to oversquashing alleviation, long-range
information flow, and calibration improvement.

\subsection{Theoretical Analysis of Curvature-Aware Reweighting}
\label{ssec:theory}

Throughout this section we work with the \emph{unnormalised} graph Laplacian
$\mathbf{L} = \mathbf{D} - \mathbf{W}$ unless explicitly stated otherwise,
where $\mathbf{D} = \diag(\mathbf{W}\mathbf{1})$ is the degree matrix.  All
graphs are assumed to be undirected, weighted, and connected.  We write
$\mathbf{A} \succeq \mathbf{B}$ to denote the L\"{o}wner (positive
semidefinite) partial order.

\subsubsection{Direction 1: Spectral Gap Improvement}
\label{sssec:spectral-gap}

\begin{lemma}[Laplacian Monotonicity]
\label{lem:laplacian-monotone}
Let $\mathbf{W}, \Delta\mathbf{W} \in \R^{N\times N}$ be symmetric,
entrywise non-negative matrices with corresponding unnormalised Laplacians
$\mathbf{L}$ and $\mathbf{L}_{\Delta}$. Then
\begin{equation}
  \mathbf{L}_{\mathbf{W}+\Delta\mathbf{W}}
    = \mathbf{L}_{\mathbf{W}} + \mathbf{L}_{\Delta\mathbf{W}},
  \quad
  \mathbf{L}_{\Delta\mathbf{W}} \succeq \mathbf{0}.
  \label{eq:lap-decomp}
\end{equation}
\end{lemma}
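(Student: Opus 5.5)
The argument has two parts: linearity of the Laplacian map $\mathbf{W} \mapsto \mathbf{L}_{\mathbf{W}} = \diag(\mathbf{W}\mathbf{1}) - \mathbf{W}$, and positive semidefiniteness of the Laplacian of any symmetric nonnegative weight matrix. The second part is essentially item (i) of Proposition~\ref{prop:validity}, applied with $\Delta\mathbf{W}$ in place of $\mathbf{W}'_t$, so I would either invoke it directly or reprove it in one line for completeness.

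\emph{Step 1 (additivity).} Both the degree operator $\mathbf{W} \mapsto \diag(\mathbf{W}\mathbf{1})$ and the identity map are linear in $\mathbf{W}$. Hence
\[
\mathbf{L}_{\mathbf{W}+\Delta\mathbf{W}} = \diag(\mathbf{W}\mathbf{1}) + \diag(\Delta\mathbf{W}\mathbf{1}) - \mathbf{W} - \Delta\mathbf{W} = \mathbf{L}_{\mathbf{W}} + \mathbf{L}_{\Delta\mathbf{W}}.
\]
This holds for any square matrices and needs neither symmetry nor nonnegativity. One small convention to fix: if diagonal entries of $\mathbf{W}$ are allowed (self-loops), they cancel between $\mathbf{D}$ and $\mathbf{W}$, so linearity is unaffected.

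\emph{Step 2 (PSD).} For $\mathbf{x} \in \R^N$ and symmetric $\Delta\mathbf{W}$, I would expand the quadratic form into the standard sum-of-squares identity
\[
\mathbf{x}^\top \mathbf{L}_{\Delta\mathbf{W}} \mathbf{x} = \sum_i x_i^2 \sum_j \Delta W_{ij} - \sum_{i,j} \Delta W_{ij} x_i x_j = \tfrac12 \sum_{i,j} \Delta W_{ij} (x_i - x_j)^2 .
\]
The symmetrisation uses $\Delta W_{ij} = \Delta W_{ji}$: write the first sum as half of $\sum_{ij}\Delta W_{ij}(x_i^2 + x_j^2)$. Nonnegativity of the entries then gives $\mathbf{x}^\top \mathbf{L}_{\Delta\mathbf{W}}\mathbf{x} \ge 0$, so $\mathbf{L}_{\Delta\mathbf{W}} \succeq \mathbf{0}$. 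Symmetry of $\mathbf{L}_{\Delta\mathbf{W}}$ is immediate from symmetry of $\Delta\mathbf{W}$.

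There is no real obstacle here. The only care point is the symmetrisation in Step 2, where the symmetry hypothesis is genuinely needed. I would also remark that the lemma immediately yields $\mathbf{L}_{\mathbf{W}+\Delta\mathbf{W}} \succeq \mathbf{L}_{\mathbf{W}}$. Together with Weyl's monotonicity, this is the form used downstream for spectral gap improvement, since the reweighting in Algorithm~\ref{alg:rewiring} has $\Delta W_{ij} = \lambda b_{ij,t} W_{ij,t} \ge 0$ and is symmetric.
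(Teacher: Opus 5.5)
Your proof is correct and follows the same route as the paper. Like the paper, you obtain the additive decomposition from linearity of $\mathbf{W}\mapsto \diag(\mathbf{W}\mathbf{1})-\mathbf{W}$, and positive semidefiniteness from the identity $\mathbf{x}^\top\mathbf{L}_{\Delta\mathbf{W}}\mathbf{x}=\tfrac12\sum_{i,j}\Delta W_{ij}(x_i-x_j)^2\ge 0$; your version is slightly more careful, writing the degree matrix correctly as $\diag(\mathbf{W}\mathbf{1})$ and making explicit where symmetry is used.
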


\begin{proposition}[Spectral Gap Monotonicity]
\label{prop:spectral-gap}
Let $\mathbf{L}_t = \diag(\mathbf{W}_t) -
\mathbf{W}_t = \mathbf{D}_t - \mathbf{W}_t$ be the unnormalised
Laplacian of the original graph and $\mathbf{L}'_t$ the Laplacian of the
reweighted graph $\mathbf{W}'_t$ obtained via Algorithm~\ref{alg:rewiring}. Then:
\begin{enumerate}
  \item[(i)] $\mathbf{L}'_t \succeq \mathbf{L}_t$ in L\"owner order sense.
   \item[(ii)] 
 If $\mathbf{L}_{\Delta W_t}$ is positive definite 
then $\lambda_k(\mathbf{L}'_t) >  \lambda_k(\mathbf{L}_t)$ for all $k$, that is, the inequality is strict.
\end{enumerate}
\end{proposition}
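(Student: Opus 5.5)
The plan is to write the reweighted adjacency as an additive perturbation of the original one and then apply Lemma~\ref{lem:laplacian-monotone}. By Algorithm~\ref{alg:rewiring}, $W'_{ij,t} = W_{ij,t}(1+\lambda b_{ij,t})$, so
\begin{equation*}
  \mathbf{W}'_t = \mathbf{W}_t + \Delta\mathbf{W}_t,
  \qquad
  (\Delta\mathbf{W}_t)_{ij} = \lambda\, b_{ij,t}\, W_{ij,t}.
\end{equation*}
The first step is to check the hypotheses of the lemma for $\Delta\mathbf{W}_t$.

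\emph{Nonnegativity.} We have $\lambda>0$, the softplus output satisfies $b_{ij,t}>0$, and $W_{ij,t}\ge 0$ because $\mathbf{W}_t$ is the symmetrisation of an average of nonnegative adjacencies. Hence $\Delta\mathbf{W}_t$ is entrywise nonnegative.

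\emph{Symmetry.} Balanced Forman curvature is symmetric in its two endpoints, $\kappa_{ij,t}=\kappa_{ji,t}$, so $b_{ij,t}=b_{ji,t}$. Since $\mathbf{W}_t$ is symmetric, so is $\Delta\mathbf{W}_t$.

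We also need to fix the intended meaning of $\mathbf{L}_t$. The statement writes $\diag(\mathbf{W}_t)$, but we read it as $\mathbf{D}_t=\diag(\mathbf{W}_t\mathbf{1})$, the unnormalised Laplacian fixed at the start of the section.

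\emph{Part (i).} Lemma~\ref{lem:laplacian-monotone} gives $\mathbf{L}'_t=\mathbf{L}_t+\mathbf{L}_{\Delta\mathbf{W}_t}$ with $\mathbf{L}_{\Delta\mathbf{W}_t}\succeq\mathbf{0}$. Therefore $\mathbf{L}'_t-\mathbf{L}_t\succeq\mathbf{0}$, which is (i). If one prefers not to cite the lemma, the same conclusion follows from the quadratic-form identity
\begin{equation*}
  \mathbf{x}^\top\mathbf{L}_{\mathbf{M}}\mathbf{x}
  = \tfrac12\sum_{i,j} M_{ij}(x_i-x_j)^2,
\end{equation*}
which is linear in $\mathbf{M}$ and nonnegative whenever $\mathbf{M}\ge 0$ entrywise.

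\emph{Part (ii).} Let $\lambda_1\le\cdots\le\lambda_N$ denote eigenvalues in increasing order. The tool is Weyl's inequality for a sum of Hermitian matrices,
\begin{equation*}
  \lambda_k(\mathbf{A}+\mathbf{B}) \ge \lambda_k(\mathbf{A}) + \lambda_1(\mathbf{B}).
\end{equation*}
Equivalently, one can use the Courant--Fischer characterisation: for every $k$-dimensional subspace $S$, $\min_{x\in S,\|x\|=1}x^\top(\mathbf{L}_t+\mathbf{L}_\Delta)x$ is at least the corresponding minimum for $\mathbf{L}_t$ plus $\lambda_{\min}(\mathbf{L}_\Delta)$. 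Applying this with $\mathbf{A}=\mathbf{L}_t$ and $\mathbf{B}=\mathbf{L}_{\Delta\mathbf{W}_t}$, and assuming $\mathbf{L}_{\Delta\mathbf{W}_t}\succ\mathbf{0}$, we obtain
\begin{equation*}
  \lambda_k(\mathbf{L}'_t)\ \ge\ \lambda_k(\mathbf{L}_t)+\lambda_{\min}(\mathbf{L}_{\Delta\mathbf{W}_t})\ >\ \lambda_k(\mathbf{L}_t)
  \quad\text{for every } k.
\end{equation*}

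The main obstacle is the hypothesis of (ii) itself. Any graph Laplacian satisfies $\mathbf{L}_{\Delta\mathbf{W}_t}\mathbf{1}=\mathbf{0}$, so $\mathbf{L}_{\Delta\mathbf{W}_t}$ can never be positive definite, and the assumption is vacuous as literally stated. Moreover, $\lambda_1(\mathbf{L}'_t)=\lambda_1(\mathbf{L}_t)=0$, so strict inequality must fail at $k=1$.

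I would repair this by restricting everything to $\mathbf{1}^\perp$. Both Laplacians map $\mathbf{1}^\perp$ into itself, so they can be regarded as operators on that subspace. The hypothesis becomes that $\mathbf{L}_{\Delta\mathbf{W}_t}$ is positive definite on $\mathbf{1}^\perp$, which holds, for instance, whenever the support graph of $\Delta\mathbf{W}_t$ is connected; this is automatic here because $b_{ij,t}>0$ on every edge and the graph is connected. Running Courant--Fischer on $\mathbf{1}^\perp$ then yields $\lambda_k(\mathbf{L}'_t)>\lambda_k(\mathbf{L}_t)$ for $k\ge 2$. In particular the spectral gap $\lambda_2$ strictly increases, which is the quantity of interest.

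I would state this corrected version explicitly, noting that the $k=1$ case holds with equality.
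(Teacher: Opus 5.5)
Your proof is correct and follows the paper's route: part (i) is exactly the paper's argument ($\Delta\mathbf{W}_t\ge 0$ entrywise, then Lemma~\ref{lem:laplacian-monotone}), and your Weyl-inequality step for (ii) is the content of the Horn--Johnson monotonicity corollary (4.3.12) that the paper cites. You are also right that $\mathbf{L}_{\Delta\mathbf{W}_t}\mathbf{1}=\mathbf{0}$ makes the hypothesis of (ii) unsatisfiable, so (ii) as stated holds only vacuously; your restriction to $\mathbf{1}^\perp$ gives the meaningful version, strict for $k\ge 2$ with equality at $k=1$, because softplus is strictly positive and so $\Delta\mathbf{W}_t$ inherits the connected support of $\mathbf{W}_t$.
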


\begin{remark}
\label{rem:normalised-lap}
Proposition~\ref{prop:spectral-gap} is stated for the \emph{unnormalised}
Laplacian $\mathbf{L} = \mathbf{D} - \mathbf{W}$, which is the object
directly used in Algorithm~\ref{alg:rewiring}.  An analogous result for the
\emph{normalised} Laplacian $\tilde{\mathbf{L}} =
\mathbf{D}^{-1/2}\mathbf{L}\mathbf{D}^{-1/2}$ requires more care because
normalisation by the (changing) degree matrix does not preserve the L\"{o}wner
order; we leave this extension to future work.
\end{remark}

\subsubsection{Direction 2: Effective Resistance and Long-Range Information Flow}
\label{sssec:eff-res}

\begin{corollary}[Improved Scaled Kirchhoff Index]
\label{cor:total-eff-res}
Inspired from LASER~\cite{barbero2024laser}, define $
\mathcal{K}(\mathbf{W_t})
\triangleq
\sum_{k=2}^N {1/\lambda_k(\mathbf{L}_t)}.
$
Then
\[
\mathcal{K}(\mathbf{W}'_t) \le \mathcal{K}(\mathbf{W}_t).
\]
\end{corollary}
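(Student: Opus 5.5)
The plan is to deduce the corollary from the Löwner ordering in Proposition~\ref{prop:spectral-gap}(i), via Weyl's monotonicity (Courant--Fischer) applied eigenvalue by eigenvalue. Then we invert and sum the resulting inequalities.

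First we record the reweighting structure. By Algorithm~\ref{alg:rewiring}, $W'_{ij,t} = W_{ij,t}(1+\lambda b_{ij,t})$ with $\lambda>0$ and $b_{ij,t}=\operatorname{softplus}(\cdot)>0$. Hence $\Delta\mathbf{W}_t := \mathbf{W}'_t-\mathbf{W}_t$ is symmetric and entrywise nonnegative. It is supported exactly on the edge set of $\mathbf{W}_t$.

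Lemma~\ref{lem:laplacian-monotone} then gives
\[
\mathbf{L}'_t=\mathbf{L}_t+\mathbf{L}_{\Delta\mathbf{W}_t}\succeq \mathbf{L}_t,
\]
which is Proposition~\ref{prop:spectral-gap}(i). By Courant--Fischer, the Löwner order implies $\lambda_k(\mathbf{L}'_t)\ge\lambda_k(\mathbf{L}_t)$ for every $k$, with eigenvalues in the paper's increasing order.

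Second, we need $\lambda_k(\mathbf{L}_t)>0$ for $k\ge 2$, so that the reciprocals in $\mathcal{K}$ are finite. This follows from the standing assumption of Section~\ref{ssec:theory} that the graph is connected. For a connected weighted graph the kernel of the unnormalised Laplacian is exactly $\operatorname{span}(\mathbf{1})$, so $\lambda_1=0<\lambda_2$.

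The reweighted graph has the same edge set, since $W'_{ij,t}>0$ exactly when $W_{ij,t}>0$. It is therefore also connected, and $\lambda_1(\mathbf{L}'_t)=0$ as well. In particular the index $k=1$ is correctly excluded on both sides, because $\mathbf{1}$ lies in both kernels.

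Finally, for $k\ge2$ we have $0<\lambda_k(\mathbf{L}_t)\le\lambda_k(\mathbf{L}'_t)$. Since $x\mapsto 1/x$ is decreasing on $(0,\infty)$, this gives $1/\lambda_k(\mathbf{L}'_t)\le 1/\lambda_k(\mathbf{L}_t)$. Summing over $k=2,\dots,N$ yields $\mathcal{K}(\mathbf{W}'_t)\le\mathcal{K}(\mathbf{W}_t)$.

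The main subtlety is the notational one hidden in the definition. $\mathcal{K}(\mathbf{W}_t)$ is written via $\lambda_k(\mathbf{L}_t)$, and we read $\mathcal{K}(\mathbf{W}'_t)$ as $\sum_{k\ge2}1/\lambda_k(\mathbf{L}'_t)$. The second step is also where care is needed: it ensures that no zero eigenvalue other than the trivial one appears. Note that Proposition~\ref{prop:spectral-gap}(ii) cannot literally apply, because $\mathbf{L}_{\Delta\mathbf{W}_t}$ always has $\mathbf{1}$ in its kernel. So if strictness is desired, I would instead argue on $\mathbf{1}^\perp$. If some edge has positive increment, then $\mathbf{L}_{\Delta\mathbf{W}_t}\neq 0$ and the trace strictly increases, $\sum_k\lambda_k(\mathbf{L}'_t)>\sum_k\lambda_k(\mathbf{L}_t)$. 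Hence at least one $\lambda_k$ with $k\ge2$ strictly increases, and the inequality on $\mathcal{K}$ is strict.
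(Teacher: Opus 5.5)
Your proof is correct and follows the paper's route: Lemma~\ref{lem:laplacian-monotone} gives $\mathbf{L}'_t=\mathbf{L}_t+\mathbf{L}_{\Delta\mathbf{W}_t}\succeq\mathbf{L}_t$, Weyl monotonicity (the paper cites Horn--Johnson, Cor.~4.3.12) gives $\lambda_k(\mathbf{L}'_t)\ge\lambda_k(\mathbf{L}_t)$, and you then invert and sum over $k\ge 2$. Your extra checks are sound and fill in what the paper leaves implicit: positivity of $\lambda_k$ for $k\ge2$ via connectedness, your (correct) observation that $\mathbf{L}_{\Delta\mathbf{W}_t}$ can never be positive definite since it annihilates $\mathbf{1}$, and the trace argument for strictness.
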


We show numerically that our reweighting leads to improved scaled Kirchhoff index in Table~\ref{alg:rewiring}.

\subsubsection{Direction 3: Local Bottleneck Alleviation}
\label{sssec:oversquashing}

\begin{definition}[Cut Conductance]
\label{def:cheeger}
Inspired from LASER~\cite{barbero2024laser}, for a nonempty proper subset $S\subset \mathcal{V}$, define ~\cite{barbero2024laser}:
\[
\phi_{\mathbf W_t}(S)
=
\frac{\mathrm{cut}(S;\mathbf W_t)}
     {\min\{\mathrm{vol}(S;\mathbf W_t),\mathrm{vol}(\mathcal V\setminus S;\mathbf W_t)\}},
\]
where
\[
\mathrm{cut}(S;\mathbf W_t)
=
\sum_{i\in S,\; j\notin S} w_{ij,t},
\qquad
\mathrm{vol}(S;\mathbf W_t)
=
\sum_{i\in S}\sum_j w_{ij,t}.
\]
The Cheeger constant is defined as:
\[
h(\mathcal G_t)=\min_{\emptyset\neq S\subsetneq\mathcal V}\phi_{\mathbf W_t}(S).
\]
\end{definition}

\begin{proposition}[Local Bottleneck Alleviation]
\label{prop:oversquashing}
Inspired from LASER~\cite{barbero2024laser}, let $S\subset \mathcal V$ be a nontrivial cut. Assume the rewiring changes only
edges crossing the boundary $\partial S$, i.e.
\[
w'_{ij,t}\ge w_{ij,t}
\quad \text{for } i\in S,\; j\notin S,
\qquad
w'_{ij,t}=w_{ij,t}
\quad \text{otherwise.}
\]
Then
$
\phi_{\mathbf W'_t}(S)\ge \phi_{\mathbf W_t}(S),
$
with strict inequality if at least one crossing edge is strengthened.
Therefore, if $S$ is a curvature-identified bottleneck cut, the rewiring
locally alleviates that bottleneck.
\end{proposition}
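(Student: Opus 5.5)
The plan is a direct computation: track how each ingredient of $\phi_{\mathbf W_t}(S)$ in Definition~\ref{def:cheeger} changes under the assumed reweighting, and reduce the claim to an elementary inequality between fractions. Write $c=\mathrm{cut}(S;\mathbf W_t)$, $v_S=\mathrm{vol}(S;\mathbf W_t)$, $v_{\bar S}=\mathrm{vol}(\mathcal V\setminus S;\mathbf W_t)$, and let $\delta=\sum_{i\in S,\,j\notin S}(w'_{ij,t}-w_{ij,t})\ge 0$ be the total added crossing weight. I will assume $\mathbf W'_t$ is symmetric, as in Proposition~\ref{prop:validity}, so that $w'_{ji,t}=w'_{ij,t}$ for crossing pairs.

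\textbf{Step 1: update the cut and the volumes.}
\begin{itemize}
\item Since only crossing edges change, $\mathrm{cut}(S;\mathbf W'_t)=c+\delta$.
\item In $\mathrm{vol}(S)=\sum_{i\in S}\sum_j w_{ij}$, each crossing edge appears exactly once, through its endpoint in $S$. Hence $\mathrm{vol}(S;\mathbf W'_t)=v_S+\delta$.
\item By symmetry, each crossing edge also appears exactly once in $\mathrm{vol}(\mathcal V\setminus S)$, through its endpoint outside $S$. Hence $\mathrm{vol}(\mathcal V\setminus S;\mathbf W'_t)=v_{\bar S}+\delta$.
\end{itemize}
Both volumes shift by the same amount $\delta$, so the minimum shifts too: with $m=\min\{v_S,v_{\bar S}\}$, the new denominator is $m+\delta$. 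This common shift is the key observation. Without it one would need to know which side is smaller after reweighting.

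\textbf{Step 2: reduce to a fraction inequality.} The claim now reads
\[
\frac{c+\delta}{m+\delta}\ge\frac{c}{m}.
\]
Connectivity gives $m>0$. Cross-multiplying shows the inequality is equivalent to $\delta(m-c)\ge 0$. This holds because $\delta\ge 0$ and $c\le m$. To see $c\le m$, note the cut is a sub-sum of each volume: it collects the terms $w_{ij}$ with $j$ on the other side, from either side's sum. Nonnegativity of the weights is used exactly here.

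\textbf{Step 3: strictness, the expected obstacle.} Strengthening one crossing edge gives $\delta>0$. But strict inequality needs $\delta(m-c)>0$, so it also requires $c<m$. That is, the smaller side of the cut must carry some internal weight, not only boundary edges. In the degenerate case $c=m$ (for example, $S$ a single vertex all of whose edges cross), the conductance is already $1$ and stays $1$. So the strict claim is false without an extra hypothesis.

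I will therefore prove strictness under the added assumption $c<m$, i.e.\ $\phi_{\mathbf W_t}(S)<1$. This is automatic for any genuine bottleneck cut, since a bottleneck has small conductance. I will flag this caveat explicitly.

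The final sentence of the statement, that reweighting locally alleviates a curvature-identified bottleneck cut, then follows by applying the inequality to that particular $S$. I would note that the hypothesis "only crossing edges change" is an idealisation. Algorithm~\ref{alg:rewiring} generally also reweights internal edges, so the proposition applies to it only when the positively boosted edges ($b_{ij,t}$ large) are concentrated on $\partial S$.
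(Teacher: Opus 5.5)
Your proof is correct, and it does more than the paper does. The appendix proof of Proposition~\ref{prop:oversquashing} is a single sentence deferring to LASER, with no argument. Your direct computation shows which hypothesis does which job:
symmetry of $\mathbf W'_t$ makes both volumes shift by the same $\delta$, so the minimum shifts by $\delta$ without knowing which side is smaller; nonnegativity gives $c\le m$; connectivity gives $m>0$.

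It also exposes a defect that the citation hides, and your strictness caveat is right. For $S=\{v\}$ a single vertex without a self-loop, one always has $\mathrm{cut}(S)=\deg(v)\le \mathrm{vol}(\mathcal V\setminus S)$. Hence $\phi(S)=1$ before and after any strengthening of its edges. For example, take the unit-weight path $1$--$2$--$3$ with $S=\{1\}$ and $w'_{12}=2$: then $\phi=1$ in both cases. The correct strict statement is what you propose: strict inequality holds if and only if $\delta>0$ and $\phi_{\mathbf W_t}(S)<1$.

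Two small refinements.

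\emph{First}, your symmetric reading of the hypothesis is the only consistent one. Read literally over ordered pairs, ``$w'_{ij,t}=w_{ij,t}$ otherwise'' also fixes $w'_{ji,t}$ for $j\notin S$, $i\in S$, and symmetry would then force $\delta=0$. The weak inequality in fact survives even the asymmetric reading. There the new denominator $\min\{v_S+\delta,v_{\bar S}\}$ is still at most $m+\delta$, so $\phi'\ge (c+\delta)/(m+\delta)\ge c/m$.

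\emph{Second}, your closing remark about Algorithm~\ref{alg:rewiring} can be sharpened. Since $\operatorname{softplus}>0$ and $\lambda>0$, every existing edge is strictly boosted. So the hypothesis fails whenever $S$ or $\mathcal V\setminus S$ contains an internal edge. Boosting internal edges raises the volumes without raising the cut, so a reweighting that favours internal edges can lower $\phi(S)$. The statement's closing sentence is therefore only heuristic for the algorithm as implemented.
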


We numerically show that our reweighting method improves the local bottleneck conductance in Table~\ref{alg:rewiring}. 

\begin{remark}[Scope of Local Bottleneck alleviation]
\label{rem:bfc-cheeger}
The statement $\phi_{\mathbf W'_t}(S)\ge \phi_{\mathbf W_t}(S)$ holds true for any cut $S$, $S\subsetneq\mathcal V$, $S\neq\emptyset$. Proposition~\ref{prop:oversquashing}
is a graph-theoretic
statement that is needed for oversquashing: negatively curved edges
indicate bottlenecks, and strengthening the crossing edges of such a bottleneck
increases its conductance. We note that proposition~\ref{prop:oversquashing} does not claim global optimality; finding the set $S$ that provides a global optimum is not guaranteed by this proposition. Numerical results illustrating $\phi_{\mathbf W'_t}(S)$ and $ \phi_{\mathbf W_t}(S)$ for various $S$ is given in the experimental section.
\end{remark}

\subsubsection{Direction 4: Probabilistic Calibration and Marginal CRPS}
\label{sssec:crps}

We evaluate our model performance using the Continuous Ranked Probability Score (CRPS) metric, defined as: $\small{
\mathrm{CRPS}(F, y) = \mathbb{E} ||X - y|| + \frac{1}{2} \mathbb{E} ||X - X'||}$ where $F(x)$ is the cumulative distribution function (CDF) of the forecasted distribution, $y$ is the observed value, $X,X'$ are sampled from $F$.  Let $\mathrm{CRPS}_{\mathrm{sum}}$ be the average sum of all marginal CRPS values computed per variate.   We use $\mathrm{CRPS}_{\mathrm{sum}}$ to compare the performance of the proposed method against the state of the art.

\subsection{Inference-Time Residual Refinement}
\label{ssec:refinement}

This section details the sampling performed by the proposed method at inference time.  Direct autoregressive rollout under the batch Gaussian model may
underestimate volatility or propagate early-step errors. To mitigate this,
we introduce a refinement step that draws next-step residual samples from the
Gaussian conditional induced by the model covariance over the window
$\boldsymbol{\eta}_{t-D+2:t+1}$.

\paragraph{Block Partition.}
At forecast time $t+1$, we follow the sampling from the gaussian posterior same as TCVAR\cite{zheng2024multivariateprobabilistictimeseries}. Our sampled forecast take the spatial error propagation into account.

\paragraph{Node-Wise Volatility Scaling.}
The limitation of TCVAR is that it does not capture local volatilies. To overcome this limitation, we propose node-wise volatility scaling. Conditional covariance $\Sigma_{\mathrm{cond},t+1}$ in Eq.~(9) already contains both
marginal uncertainty and spatial dependence in the original data units. Therefore, directly
multiplying a Cholesky factor of $\Sigma_{\mathrm{cond},t+1}$ by an additional empirical
volatility scale may double-count marginal variance. To avoid this, we separate the conditional
covariance into node-wise marginal scales and a dimensionless conditional correlation matrix. Let
\[
S_{\mathrm{cond},t+1}
=
\operatorname{diag}\left(
\sqrt{(\Sigma_{\mathrm{cond},t+1})_{11}},
\ldots,
\sqrt{(\Sigma_{\mathrm{cond},t+1})_{NN}}
\right),
\]
and define the corresponding conditional correlation matrix
\[
R_{\mathrm{cond},t+1}
=
S_{\mathrm{cond},t+1}^{-1}
\Sigma_{\mathrm{cond},t+1}
S_{\mathrm{cond},t+1}^{-1}.
\]
Here, $R_{\mathrm{cond},t+1}$ is dimensionless and captures the spatial dependence structure
of the conditional residual distribution.

To adapt the marginal scale to recent local volatility, we maintain for each node $i$ an
exponential moving average of squared one-step residuals. Let
\[
r_{i,t}=x_{i,t}-\mu_{i,t},
\]
and define
\[
s_{i,t}^{2}
=
\rho s_{i,t-1}^{2}
+
(1-\rho) r_{i,t}^{2},
\qquad
\rho \in [0,1).
\]
Thus, $s_{i,t}^{2}$ is an empirical estimate of the recent residual variance at node $i$,
not a log-variance. We collect these estimates in the diagonal matrix
\[
D_t
=
\operatorname{diag}(s_{1,t}^{2},\ldots,s_{N,t}^{2}).
\]
Let $R_{\mathrm{cond},t+1}^{1/2}$ denote any matrix square root satisfying
\[
R_{\mathrm{cond},t+1}^{1/2}
\left(R_{\mathrm{cond},t+1}^{1/2}\right)^{\top}
=
R_{\mathrm{cond},t+1}.
\]
The volatility-adapted refined residual sample is then given by
\[
\tilde{\eta}^{\mathrm{ref}}_{t+1}
=
\mu_{\mathrm{cond},t+1}
+
D_t^{1/2}
R_{\mathrm{cond},t+1}^{1/2}
\xi_{t+1},
\qquad
\xi_{t+1}\sim\mathcal{N}(0,I_N).
\tag{10}
\]
This construction preserves the spatial correlation structure learned by the conditional covariance
while replacing its marginal scale with a recent node-wise volatility estimate.

\paragraph{Training Objective. }
\label{ssec:training}

All parameters are optimized jointly by minimizing the negative log-likelihood
(NLL) under the batch Gaussian model:
\begin{equation}
  \mathcal{L}
    = -\log p(\boldsymbol{\eta}_t^{\mathrm{bat}})
    = \frac{DN}{2}\log(2\pi)
      + \frac{1}{2}\log|\Sigma_t^{\mathrm{bat}}|
      + \frac{1}{2}(\boldsymbol{\eta}_t^{\mathrm{bat}})^\top
          (\Sigma_t^{\mathrm{bat}})^{-1}
          \boldsymbol{\eta}_t^{\mathrm{bat}}.
  \label{eq:nll}
\end{equation}
Tractable evaluation exploits the Woodbury identity applied to the Kronecker
structure, reducing the dominant inversion cost from $\mathcal{O}((DN)^3)$
to $\mathcal{O}(D^3 + R^3 + DNR^2)$.

\section{Experiments}
To demonstrate the applicability of our method, Section 4.1 introduces the dataset and the baselines used for comparison. Sections 4.2 and 4.3 describe the graph construction of our approach and present the comparative results. Sections 4.4 and 4.5 further examine model interpretability and report the results of an ablation study.
\label{sec:experiments}

\subsection{Datasets and Baselines}
\label{ssec:datasets}
To assess its robustness across different scenarios, we test Teger on four real-world traffic datasets described in
Table~\ref{tab:datasets}, and 4 backbones with
diagonal Gaussian likelihoods. 
Specifically, the 4 backbones are the (i) Long Short-Term Memory networks~\cite{salinas2019highdimensional} (LSTM) with diagonal Gaussian likelihood, serving as a fundamental sequential modeling baseline.; (ii) the standard transformer~\cite{vaswani2023attention} architecture adapted for time series forecasting with point-wise attention and probabilistic output layers; (iii) the \textbf{xLSTM~\cite{beck2024xlstm}}, an evolution of the traditional LSTM designed to overcome its limitations in scaling and memory capacity; and (iv) the \textbf{MTGNN}\cite{10.1145/3394486.3403118}, alternating use of graph convolution and temporal convolution modules. We include this baseline to address the possibility that gains may come from injecting spatial structure rather than from the proposed curvature-aware covariance mechanism.
We compare Teger against a naive approach, which refers to raw predictions from the backbone without any temporal correlation modeling; and TCVAR~\cite{zheng2024multivariateprobabilistictimeseries}
which models temporal error correlations but not spatial ones. Additionally, we compare Teger with a state-of-the-art rewiring method, LASER~\cite{barbero2024laser}, which inserts edges via a sequence of augmented snapshots. A chronological 70/10/20\% split is applied to all datasets. The graph construction is detailed in Appendix~\ref{graph_cons}.

\subsection{Results}
\label{ssec:results}

Table~\ref{tab:crps} presents CRPS$_{\mathrm{sum}}$ results. Teger consistently outperforms baselines, TCVAR~\cite{zheng2024multivariateprobabilistictimeseries}, and LASER~\cite{barbero2024laser} across most forecasting horizons, with Teger achieving the best results in most settings. The full results with STDs are reported in Table~\ref{tab:xlstm-statistical-results}.
To demonstrate the effectiveness of our model, from Fig.~\ref{fig3} we can conclude that it is feasible to reduce prediction error in autoregressive models across future horizons by properly modeling spatial error propagation. Quantitatively, after modeling spatial error propagation and refining the forecasts, we improve the prediction quality of the TCVAR~\cite{zheng2024multivariateprobabilistictimeseries} model by reducing nearly 30\% of the MAE error magnitude. The hyperparameters, runtime, and memory analysis are detailed in Appendix~\ref{sec:hyperparameter_search}.

\begin{table}[t]
\centering
\caption{$\mathrm{CRPS}_{\mathrm{sum}}{\downarrow}$ results across 4 datasets. Best result per row in \textbf{bold}.}
\label{tab:crps}
\resizebox{0.8\textwidth}{!}{
\begin{tabular}{llrrrrrrrrrrrr}
\toprule
& & \multicolumn{4}{c}{15 min (3-step)}
  & \multicolumn{4}{c}{30 min (6-step)}
  & \multicolumn{4}{c}{60 min (12-step)} \\
\cmidrule(lr){3-6}\cmidrule(lr){7-10}\cmidrule(lr){11-14}
Dataset & Backbone & naïve & TCVAR & LASER & Teger
                   & naïve & TCVAR & LASER & Teger
                   & naïve & TCVAR & LASER & Teger \\
\midrule

\multirow{4}{*}{PeMS03} & LSTM & 0.0475 & 0.0405 & 0.0387 & \textbf{0.0383} & 0.0476 & 0.0419 & \textbf{0.0397} & 0.0401 & 0.0503 & 0.0491 & 0.0436 & \textbf{0.0401} \\
 & Transformer & 0.0457 & 0.0355 & 0.0313 & \textbf{0.0292} & 0.0473 & 0.0371 & 0.0329 & \textbf{0.0305} & 0.0490 & 0.0386 & 0.0340 & \textbf{0.0316} \\
 & xLSTM & 0.0448 & 0.0390 & 0.0351 & \textbf{0.0316} & 0.0464 & 0.0405 & 0.0360 & \textbf{0.0328} & 0.0479 & 0.0420 & 0.0377 & \textbf{0.0341} \\
 & MTGNN & 0.0539 & 0.0491 & 0.0449 & \textbf{0.0428} & 0.0509 & 0.0422 & \textbf{0.0403} & 0.0445 & 0.0583 & 0.0541 & 0.0496 & \textbf{0.0483} \\
\midrule
\multirow{4}{*}{PeMS04} & LSTM & 0.0229 & 0.0197 & 0.0165 & \textbf{0.0134} & 0.0242 & 0.0210 & 0.0173 & \textbf{0.0144} & 0.0255 & 0.0221 & 0.0184 & \textbf{0.0152} \\
 & Transformer & 0.0218 & 0.0156 & 0.0136 & \textbf{0.0115} & 0.0231 & 0.0171 & 0.0150 & \textbf{0.0125} & 0.0243 & 0.0178 & 0.0156 & \textbf{0.0133} \\
 & xLSTM & 0.0168 & 0.0155 & 0.0135 & \textbf{0.0112} & 0.0180 & 0.0166 & 0.0140 & \textbf{0.0120} & 0.0191 & 0.0177 & 0.0149 & \textbf{0.0128} \\
 & MTGNN & 0.0301 & 0.0255 & 0.0182 & \textbf{0.0142} & 0.0335 & 0.0301 & 0.0220 & \textbf{0.0190} & 0.0310 & 0.0275 & 0.0258 & \textbf{0.0184} \\
\midrule
\multirow{4}{*}{PeMS07} & LSTM & 0.0968 & 0.0956 & 0.0896 & \textbf{0.0871} & 0.0972 & 0.0949 & 0.0940 & \textbf{0.0886} & 0.1001 & 0.0989 & 0.0958 & \textbf{0.0922} \\
 & Transformer & 0.0957 & 0.0932 & 0.0914 & \textbf{0.0871} & 0.0974 & 0.0949 & 0.0926 & \textbf{0.0886} & 0.0991 & 0.0966 & 0.0946 & \textbf{0.0901} \\
 & xLSTM & 0.0941 & 0.0907 & 0.0903 & \textbf{0.0864} & 0.0958 & 0.0922 & 0.0916 & \textbf{0.0878} & 0.0975 & 0.0938 & 0.0936 & \textbf{0.0891} \\
 & MTGNN & 0.1052 & 0.1013 & 0.0921 & \textbf{0.0874} & 0.1012 & \textbf{0.0955} & 0.1013 & 0.0978 & 0.1098 & 0.1023 & 0.1026 & \textbf{0.0964} \\
\midrule
\multirow{4}{*}{Brussels} & LSTM & 0.1455 & 0.0863 & 0.0871 & \textbf{0.0809} & 0.1487 & 0.0894 & 0.0895 & \textbf{0.0811} & 0.1512 & 0.0917 & 0.0921 & \textbf{0.0842} \\
 & Transformer & 0.1668 & 0.0786 & 0.0779 & \textbf{0.0668} & 0.1671 & 0.0793 & 0.0794 & \textbf{0.0692} & 0.1717 & 0.0817 & 0.0821 & \textbf{0.0695} \\
 & xLSTM & 0.1448 & 0.0748 & 0.0752 & \textbf{0.0591} & 0.1473 & 0.0801 & 0.0828 & \textbf{0.0611} & 0.1496 & 0.0819 & 0.0821 & \textbf{0.0619} \\
 & MTGNN & 0.1471 & 0.0918 & 0.0953 & \textbf{0.0853} & 0.1507 & 0.0987 & 0.0967 & \textbf{0.0832} & 0.1535 & 0.0971 & 0.0970 & \textbf{0.0935} \\
\bottomrule
\end{tabular}
}
\end{table}

\begin{table}[t]
\centering
\caption{Comparison of graph measures before and after Algorithm~\ref{alg:rewiring}(in \%).}
\label{tab:graph_measures}
\resizebox{0.5\textwidth}{!}{
\begin{tabular}{llccc}
\hline
Dataset & Backbone &  $\mathcal{K}(\mathbf{W}_t)/\mathcal{K}(\mathbf{W}'_t)$ & $\lambda'_{n-1}/\lambda_{n-1}$ & $\phi_{\mathbf W'_t}(S)/\phi_{\mathbf W_t}(S)$ \\
\hline
\multirow{4}{*}{Brussels}
& LSTM        & 34.1 & 37.1 & 33.9 \\
& Transformer & 32.4 & 36.3 & 33.6 \\
& xLSTM       & 30.6 & 34.2 & 32.5 \\
& MTGNN       & 20.1 & 23.3 & 29.8 \\
\hline
\multirow{4}{*}{PeMS04}
& LSTM        & 22.6 & 23.6 & 22.9 \\
& Transformer & 19.1 & 18.8 & 17.6 \\
& xLSTM       & 17.3 & 15.1 & 14.2 \\
& MTGNN       & 13.4 & 14.3 & 13.8 \\
\hline
\multirow{4}{*}{PeMS07}
& LSTM        & 9.1 & 9.3 & 8.9 \\
& Transformer & 8.9 & 8.3 & 7.9 \\
& xLSTM       & 7.1 & 6.9 & 6.6 \\
& MTGNN       & 6.8 & 6.3 & 6.8 \\
\hline
\multirow{4}{*}{PeMS03}
& LSTM        & 20.6 & 21.2 & 21.9 \\
& Transformer & 20.1 & 19.3 & 20.7 \\
& xLSTM       & 18.6 & 16.5 & 17.1 \\
& MTGNN       & 15.3 & 15.6 & 15.9 \\
\hline
\end{tabular}}
\end{table}


\subsection{Model Interpretability}
\label{sec:brussels_usecase}

\begin{figure}[!t]
\centering
\includegraphics[width=0.7\linewidth]{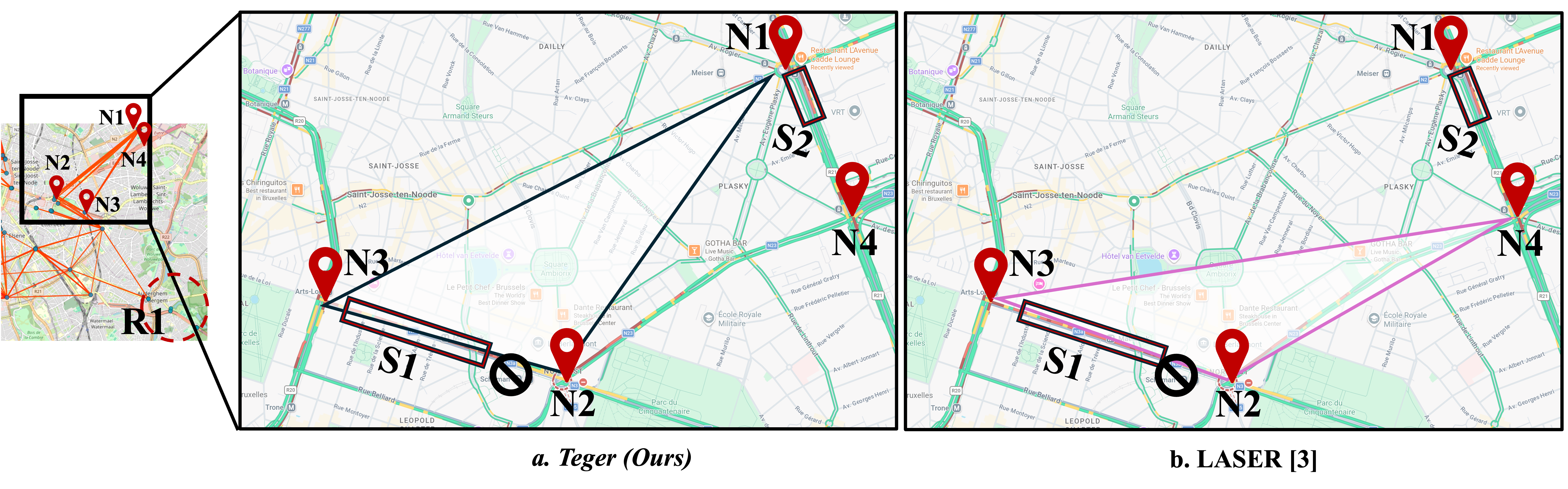}
\caption{Analysis of a rewired subgraph in a congested area of Brussels under a sampled inference via  (a.) our model (Teger) and (b.) LASER~\cite{barbero2024laser}, illustrating the physical consistency of the curvature-based graph rewiring of our model (Teger) compared to LASER~\cite{barbero2024laser}. Red segments indicate congested roads, and stop signs denote partial road closures. Data source: Google Maps.}
\label{fig_casestudy}
\end{figure}

Quantitatively experiments across 48 (dataset, backbone, horizon) configurations in Table~\ref{tab:crps} show that Teger achieves the best CRPS in 42 cases. LASER outperforms Teger primarily on PeMS03 at longer horizons. Notably, the gap is largest on the Brussels dataset (Teger improves CRPS by 25.3\% over LASER~\cite{barbero2024laser}), where the road network is densest and the most physically constrained. Figure~\ref{fig_casestudy} illustrates why this gap appears on road graphs specifically: LASER's added edges (e.g., N3–N4) do not correspond to physically realized routes, introducing covariance pathways that are not mediated by the underlying transportation infrastructure. Teger's reweighting preserves the correspondence between graph topology and physical road structure, which we hypothesize accounts for the quantitative advantage on dense urban networks.

\paragraph{Use Case: Teger's interpretability in Brussels dataset. }
This section provides a qualitative illustration of the curvature-aware rewiring mechanism applied to the Brussels traffic sensor network under the xLSTM backbone (Teger). The goal is to examine whether the edges that receive the strongest covariance amplification are geographically consistent with locations that plausibly exhibit correlated sensor errors. This subsection constitutes a visualization exercise, not a quantitative validation against independent ground-truth bottleneck labels; a rigorous validation against incident logs, empirical residual cross-correlations, or measured queue spillback data is left for future work. We have a similar observations in the cities with different topologies. Network description is detailed in Appendix~\ref{Brussels_des}.
Similar observations as above have been made on cities with different graph topologies. 

\paragraph{Physical interpretation and limitations.}\label{limitation}
The curvature mechanism selectively amplifies covariance coupling along sensor
pairs that are in close proximity but share few common alternative sensor
neighbours — a structural signature consistent with corridor-like sensor placements
on long arterial segments. For such pairs, a disruption at one sensor (due to
congestion, an incident, or roadworks) is more likely to produce statistically
co-varying forecast residuals at the neighbouring sensor, because both observe the
same traffic stream with little spatial diversification. The term \textit{rewired} denotes
that the model has increased the covariance between those sensors; it does
not imply causal traffic propagation, which would require independent validation
against speed measurements or incident records.

\paragraph{Ablation Study.}
\label{sec:ablation}

\begin{table}[t]
\centering
\caption{Ablation study on curvature-aware spatial correlation variants.}
\label{tab:ablation-curvature}
\resizebox{0.3\textwidth}{!}{
\begin{tabular}{lcc}
\toprule
Variant & NLL$\downarrow$ & MAE$\downarrow$ \\
\midrule
No rewiring          & 72.21 & 8.21 \\
- Volatility Modeling & 66.21 & 7.01 \\
+ Edge reweighting   & 58.21  & 5.91 \\
+ Latent projection (full) & \textbf{49.32}  & \textbf{4.98} \\
\bottomrule
\end{tabular}}
\end{table}

As reported in Table~\ref{tab:ablation-curvature}, curvature-aware rewiring consistently improves over the standard graph
formulation. These numerical results were obtained on PeMS378 data; similar findings are obtained on other datasets.
Edge reweighting based on negative curvature alleviates
information bottlenecks, leading to better long-range dependency modeling.
The full formulation, including latent-space projection and precision matrix
construction, yields the best performance, highlighting the importance of
combining geometric insights with low-rank modelling.

\begin{figure}[!t]
\centering
\includegraphics[width=0.6\linewidth]{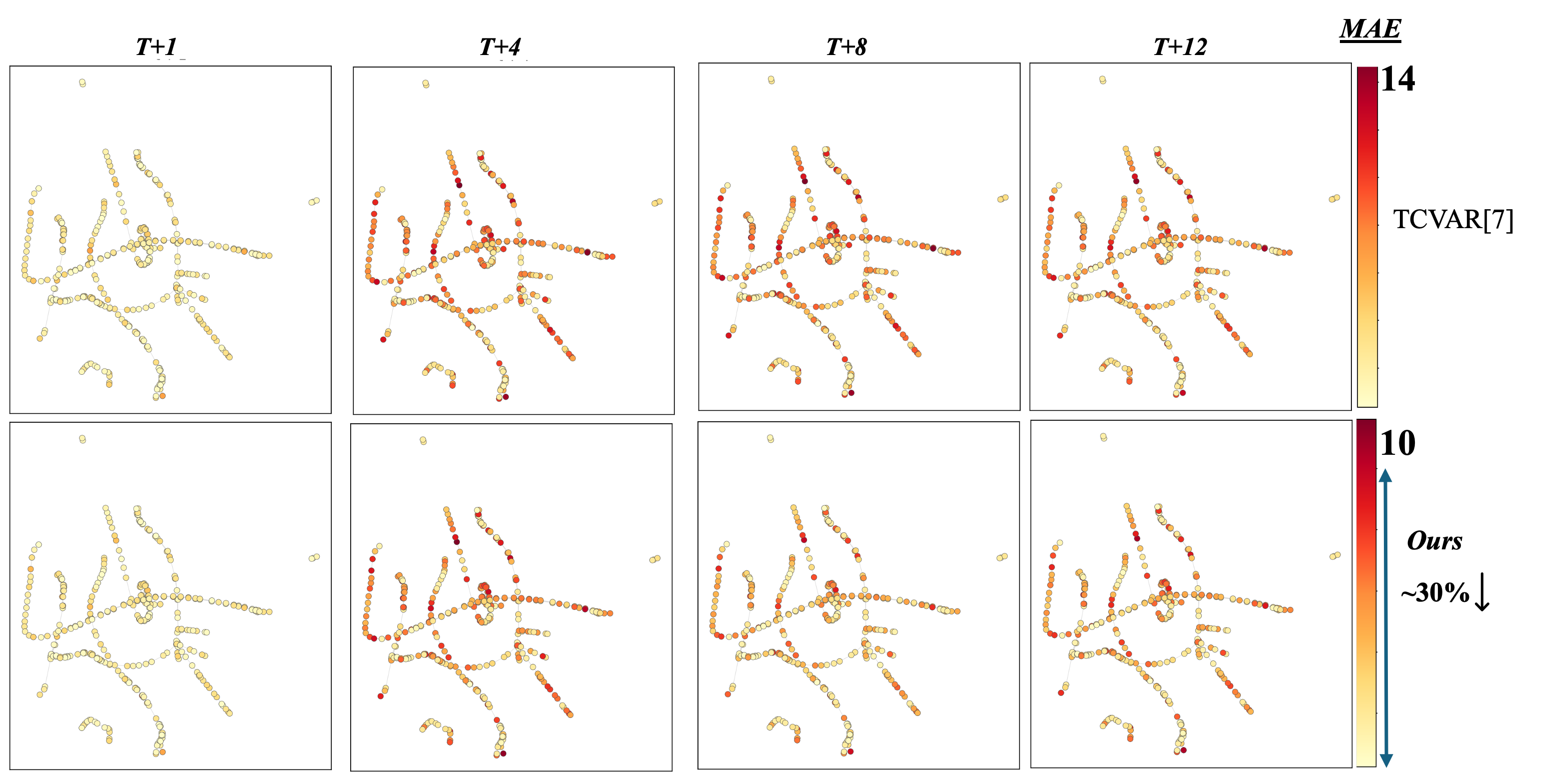}
\caption{Analysis of spatial error propagation reduction for PeMS03 dataset. The top row shows the forecasts from the TCVAR~\cite{zheng2024multivariateprobabilistictimeseries} method, while the bottom row shows the results from Teger.}
\label{fig3}
\end{figure}

\begin{figure}[h]
    \centering
    \begin{subfigure}{0.32\textwidth}\includegraphics[width=\textwidth]{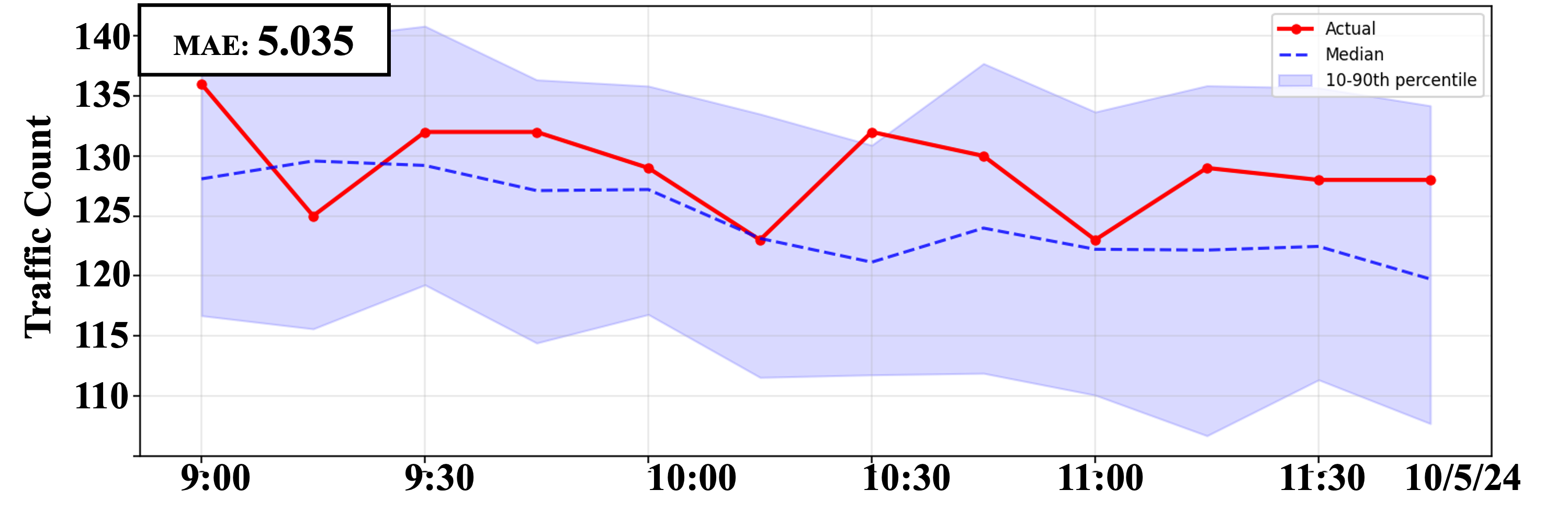}
        \caption{\small TCVAR \cite{zheng2024multivariateprobabilistictimeseries}}
        \label{fig4_a}
        \end{subfigure}
            \begin{subfigure}{0.32\textwidth}\includegraphics[width=\textwidth]{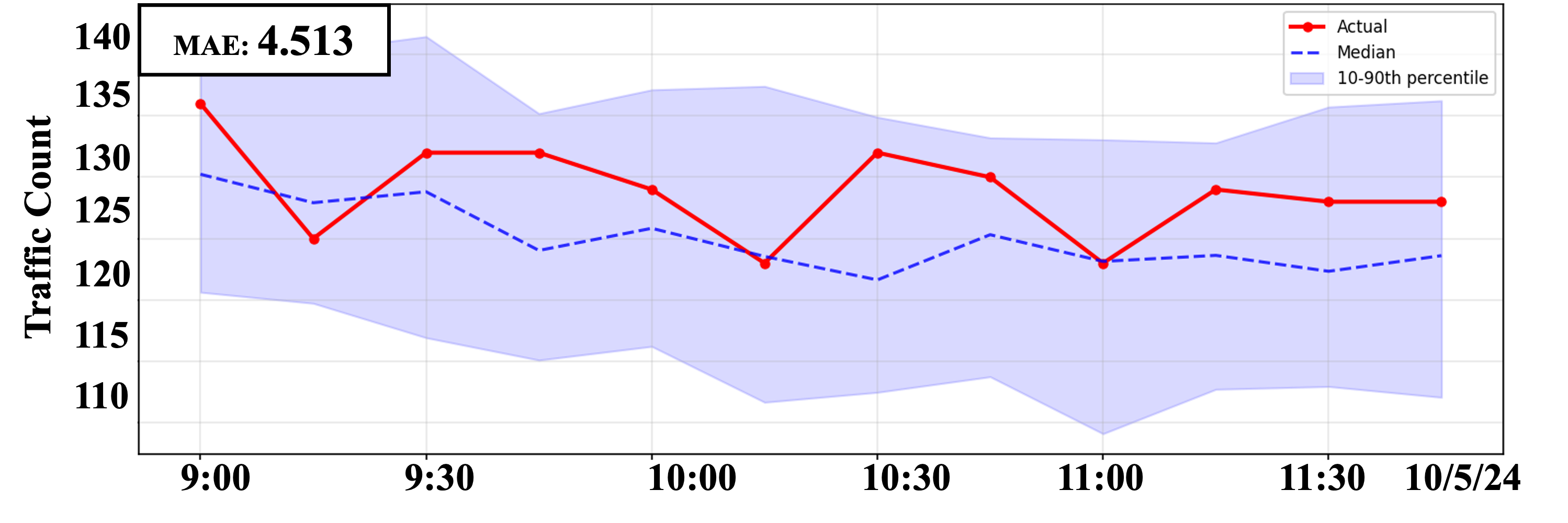}
        \caption{\small Teger w/o volatility}
        \label{fig4_b}
        \end{subfigure}
\begin{subfigure}{0.32\textwidth}\includegraphics[width=\textwidth]{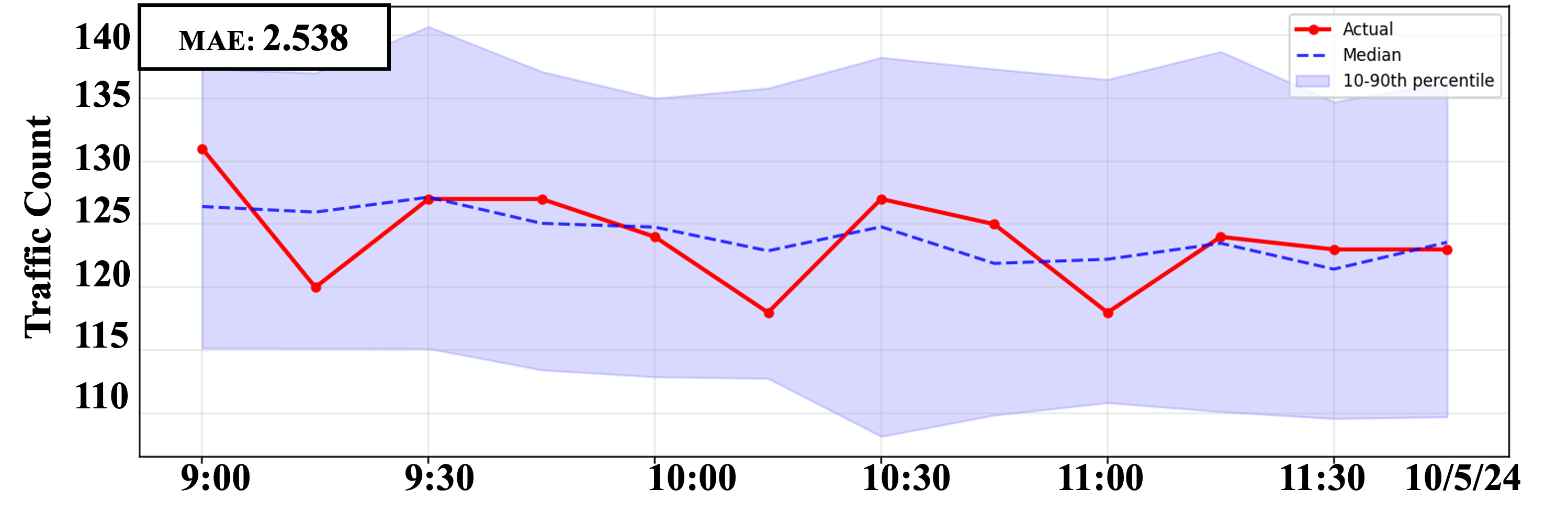}
        \caption{\small Teger (ours)}
        \label{fig4_d}
\end{subfigure}
    \caption{Qualitative comparison of results forecasted by baseline and variants of Teger for the Brussels dataset, annotated by the corresponding MAE.}
    \label{fig:qualitative_and_curvature}
\end{figure}

\section{Conclusion}
\label{sec:conclusion}

We presented Teger, a probabilistic framework for multivariate time-series
forecasting that explicitly models spatially correlated forecast errors via
graph-structured covariances. We proposed a spatial
component: a curvature-aware rewiring mechanism. 

\bibliographystyle{plain}
\bibliography{biblio}

\appendix\label{appendix}

\section{Motivation}\label{motiv}
\begin{figure}[t]
\centering
\begin{subfigure}[b]{0.9\linewidth}
         \centering
         \includegraphics[width=0.8\linewidth]{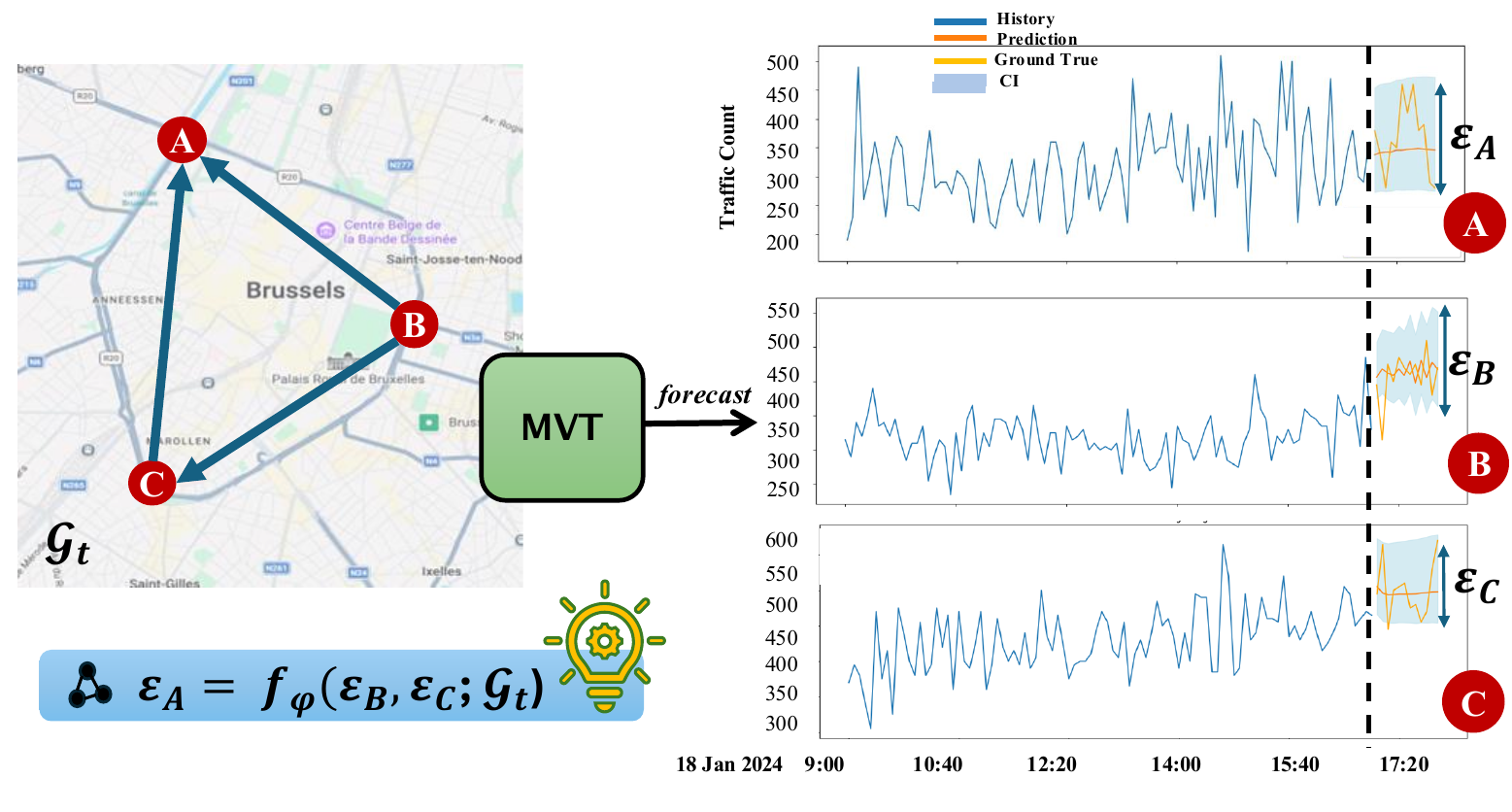}
         \caption{\small Motivation for modeling spatial error propagation in traffic forecasting. A multivariate time series (MVT) model generates traffic forecasts for multiple traffic cameras located on the intricate road network of Brussels, Belgium. Our model is designed to explicitly capture and refine predictions by accounting for correlated forecast uncertainties across the transportation network based on their spatial dependency.}
    \label{fig:sub1}
\end{subfigure}
\begin{subfigure}[b]{0.9\linewidth}
         \centering
         \includegraphics[width=0.9\linewidth]{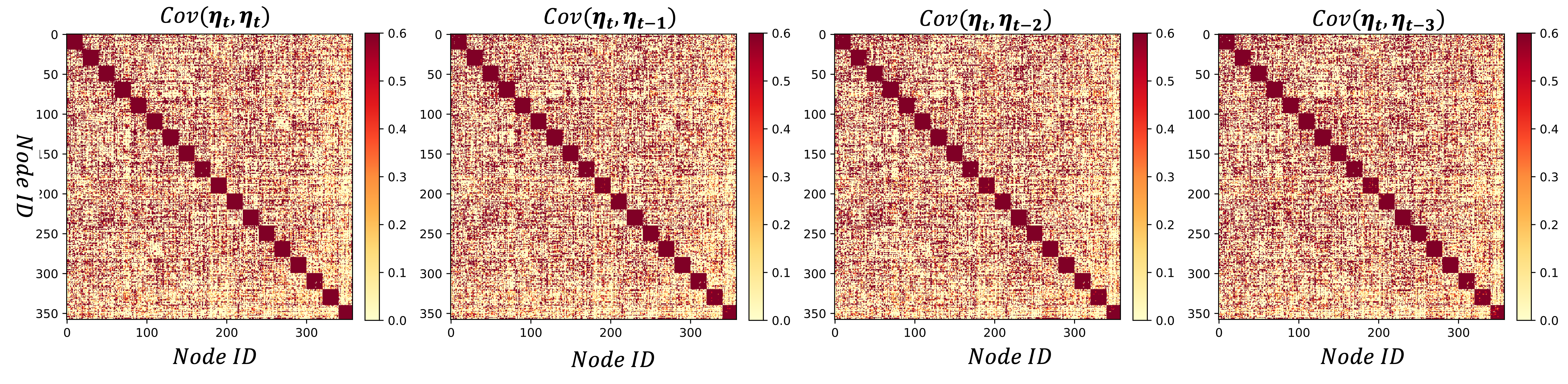}
         \caption{\small Empirical spatio-temporal covariance on the PeMS03 dataset is estimated from LSTM residuals of traffic speed forecasts. The results exhibit both contemporaneous spatial structure and cross-lag temporal persistence in the residuals, motivating the joint spatio-temporal covariance model introduced in Section~\ref{sssec:curvature}.}
    \label{fig:sub2}
\end{subfigure}
\caption{Motivation of our work.}
\label{fig:main}
\end{figure}

\section{Proofs}\label{proofs}

\subsection{Proof of Proposition~\ref{prop:validity} (Validity of Rewired Covariance)}
\label{app:prop1}

\paragraph{Part (i).}
The graph Laplacian of any symmetric, entrywise nonnegative matrix is PSD:
$\mathbf{x}^\top \mathbf{L}'_t \mathbf{x} = \frac{1}{2}\sum_{i,j}
W'_{ij,t}(x_i-x_j)^2 \ge 0$ for all $\mathbf{x} \in \R^N$.

\paragraph{Part (ii).}
For any $\mathbf{x} \in \R^R$,
$\mathbf{x}^\top \mathbf{L}_t^{(R)} \mathbf{x}
= (\mathbf{Px})^\top \mathbf{L}'_t (\mathbf{Px}) \ge 0$
since $\mathbf{L}'_t \succeq \mathbf{0}$ by Part~(i).

\paragraph{Part (iii).}
For any $\mathbf{x} \ne \mathbf{0}$,
$\mathbf{x}^\top \mathbf{Q}_t \mathbf{x}
= \alpha\|\mathbf{x}\|^2 + \beta\mathbf{x}^\top \mathbf{L}_t^{(R)}\mathbf{x}
+ \sigma_{\min}\|\mathbf{x}\|^2
\ge (\alpha+\sigma_{\min})\|\mathbf{x}\|^2 > 0$,
so $\mathbf{Q}_t \succ \mathbf{0}$ and hence $\mathbf{G}_t = \mathbf{Q}_t^{-1}$
is symmetric positive definite. \qed

\subsection{Proof of Lemma~\ref{lem:laplacian-monotone} (Laplacian Monotonicity)}

\begin{proof}
For any symmetric non-negative weight matrix $\mathbf{A}$, the unnormalised
Laplacian is $\mathbf{L}_{\mathbf{A}} = \diag(\mathbf{A}) -
\mathbf{A} \triangleq \mathbf{D}-\mathbf{A}$ where $\mathbf{D}$ is the degree matrix. Linearity of Lemma 2 gives
$\mathbf{L}_{\mathbf{W}+\Delta\mathbf{W}} = \mathbf{L}_{\mathbf{W}}
+ \mathbf{L}_{\Delta\mathbf{W}}$.  The standard quadratic-form identity
yields $\mathbf{x}^\top \mathbf{L}_{\Delta\mathbf{W}} \mathbf{x}
= \frac{1}{2}\sum_{i,j}\Delta w_{ij}(x_i-x_j)^2 \ge 0$ for all
$\mathbf{x}$, so $\mathbf{L}_{\Delta\mathbf{W}} \succeq \mathbf{0}$.
\end{proof}

\subsection{Proof of Proposition~\ref{prop:spectral-gap} (Spectral Gap Monotonicity)}

\begin{proof}
Write $\Delta\mathbf{W}_t = \mathbf{W}'_t - \mathbf{W}_t$. By the rewiring
rule,
\[
W'_{ij,t} = W_{ij,t}(1 + \lambda b_{ij,t}),
\qquad
\lambda > 0,\quad b_{ij,t}\ge 0,
\]
so $\Delta\mathbf{W}_t \ge 0$ entrywise. By
Lemma~\ref{lem:laplacian-monotone},
\[
\mathbf{L}'_t = \mathbf{L}_t + \mathbf{L}_{\Delta\mathbf{W}_t},
\qquad
\mathbf{L}_{\Delta\mathbf{W}_t}\succeq \mathbf{0},
\]
which proves (i).

For (ii), since it is a classical algebraic statement, the reader can see Corollary 4.3.12\cite{Horn_Johnson_1985}.

\end{proof}

\subsection{Proof of Corollary ~\ref{cor:total-eff-res}} (Improved Scaled Kirchhoff Index).
\begin{proof}
Since
\[
\mathbf{L}'_t=\mathbf{L}_t+\mathbf{L}_{\Delta\mathbf{W}_t},
\qquad
\mathbf{L}_{\Delta\mathbf{W}_t}\succeq 0,
\]
According to Corollary 4.3.12\cite{Horn_Johnson_1985}, the following holds:
\[
\lambda_k(\mathbf{L}'_t)\ge \lambda_k(\mathbf{L}_t)
\qquad (k=1,\dots,N).
\]
Therefore
\[
\mathcal{K}(\mathbf{W}'_t)
=
\sum_{k=2}^N 1/ \lambda_k(\mathbf{L}'_t)
\le
\sum_{k=2}^N 1/ \lambda_k(\mathbf{L}_t)
=
\mathcal{K}(\mathbf{W}_t).
\]
\end{proof}

\subsection{Proof of Proposition ~\ref{prop:oversquashing}}

\begin{proof}

The proof is given in  LASER~\cite{barbero2024laser}.
\end{proof}

\section{Notation Table}
\label{app:notation}

\begin{table}[h]
\centering
\caption{Summary of key notation used in Section~\ref{sec:method}.}
\label{tab:notation}
\begin{tabular}{ll}
\toprule
Symbol & Definition \\
\midrule
$\calG_t = (\mathcal{V},\calE_t,\mathbf{A}_t)$ & Dynamic graph (topology only) \\
$\mathbf{A}_t \in \R^{N\times N}$ & Adjacency matrix of $\calG_t$ \\
$\mathbf{L}'_t \in \R^{N\times N}$ & Rewired graph Laplacian (Algorithm~\ref{alg:rewiring}) \\
$\Sigma^{\mathrm{spa}}_t \in \R^{R\times R}$ & Spatial factor covariance \\
$\mathbf{C}_t \in \R^{D\times D}$ & Temporal kernel-mixture correlation matrix \\
$\mathbf{\Pi} \in \R^{N\times R}$ & Learnable projection matrix ($\mathbf{P} = \mathrm{colnorm}(\mathbf{\Pi})$) \\
$\Gamma_t \in \R^{N\times N}$ & Cholesky factor of conditional covariance \\
$\boldsymbol{\eta}_t^{\mathrm{bat}} \in \R^{DN}$ & Stacked residuals $\vec(\boldsymbol{\eta}_{t-D+1:t})$ \\
$\Sigma_t^{\mathrm{bat}} \in \R^{DN\times DN}$ & Batch covariance over window $D$ \\
$\mathbf{D}_t \in \R^{N\times N}$ & Diagonal node-wise volatility scaling \\
$\kappa_{ij,t}$ & Balanced Forman curvature of edge $(i,j)$ at time $t$ \\
$b_{ij,t}$ & Bottleneck score $= \operatorname{softplus}(\tau(\kappa_0-\kappa_{ij,t}))$ \\
$h(\calG)$ & Cheeger (edge expansion) constant of graph $\calG$ \\
$R_{\mathrm{eff}}(i,j)$ & Effective resistance between nodes $i$ and $j$ \\
$\mathcal{K}(\mathbf{W})$ & Kirchhoff index (total effective resistance) \\
\bottomrule
\end{tabular}
\end{table}

\section{Implementation Details}
All models in the paper were trained in a Conda environment running on an Intel 13900F and 2 NVIDIA RTX 4090 with 24 GB. Our Pytorch version is 1.13.1 with CUDA 11.7.
\section{Datasets Description}

\begin{table}[h]
\centering
\caption{Datasets Description.}
\label{tab:datasets}
\resizebox{0.5\textwidth}{!}{
\begin{tabular}{lllrr}
\toprule
\textbf{Dataset} & \textbf{Data Type }& \textbf{Region} & \textbf{Nodes} & \textbf{Time Steps} \\
\midrule
PeMS03 & Traffic Speed & California & 358 & 26208 \\
PeMS04 & Traffic Speed & California & 307 & 16992 \\
PeMS07 & Traffic Speed & California & 883 & 26208 \\
Brussels & Traffic Count & Brussels & 195 & 12672 \\
\bottomrule
\end{tabular}
}
\end{table}

\section{Memory and Runtime}

\begin{figure}[!t]
\centering
\includegraphics[width=\linewidth]{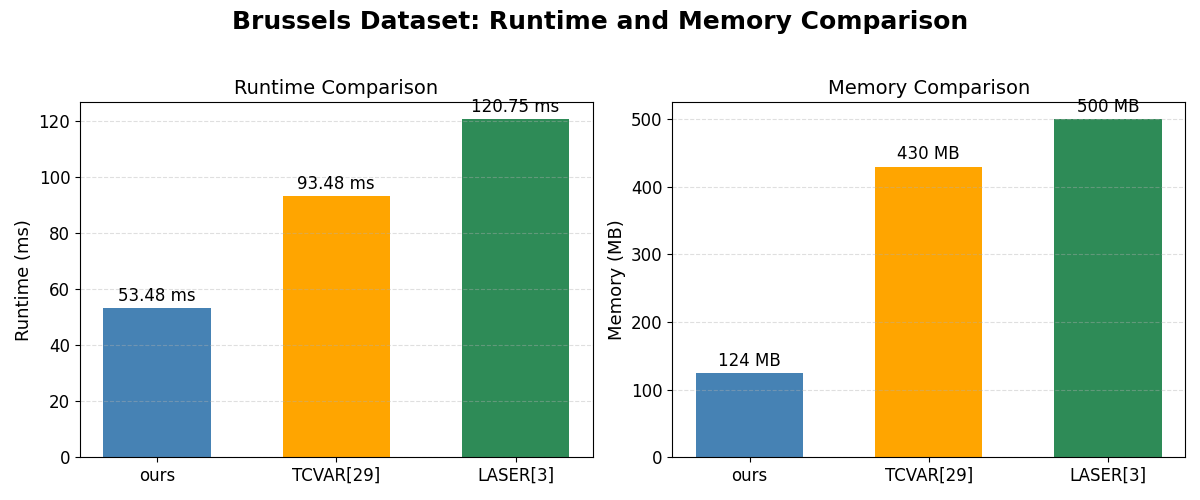}
\caption{Analysis of computation resources for Brussels dataset. Teger consumes resources less than TCVAR~\cite{zheng2024multivariateprobabilistictimeseries} and LASER~\cite{barbero2024laser} methods, while forecasting more accurately.}
\label{fig3_mem}
\end{figure}

We evaluate the performance of our proposed method compared to TCVAR and LASER in terms of memory and runtime computation resources, which the results are shown in Figure~\ref{fig3_mem}.

\section{Hyperparameter Tuning}
\label{sec:hyperparameter_search}

The hyperparameter search span and the optimal choice are summarized in Table\ref{tab:hyperparameters}.

\begin{table}[!t]
\caption{{Hyperparameter settings.}}
\label{tab:hyperparameters}
\centering
\footnotesize
\setlength{\tabcolsep}{3pt}
\begin{tabular}{@{}l l c@{}}
\toprule
\textbf{Group} & \textbf{Parameter} & \textbf{Value} \\
\midrule
\multirow{5}{*}{Training}
  & Max epochs                   & 100        \\
  & Max steps                    & 10{,}000   \\
  & Node batch size $B$               & 20         \\
  & Gradient clip                & 10.0       \\
  & Weight decay                 & $10^{-8}$  \\
\midrule
\multirow{3}{*}{Backbone}
  & Hidden dim $H$               & 40         \\
  & RNN/xLSTM/Transformers layers                   & 2          \\
  & Dropout                      & 0.01       \\
\midrule
\multirow{5}{*}{\shortstack[l]{Temporal\\Correlation}}
  & Batch horizon $D$            & 12 \\
  & Mixture components $M$       & 4          \\
  & Low-rank dim $R$             & 10         \\
  & Length-scale step $\sigma'$  & 1.0        \\
  & Loss LR / weight decay       & $10^{-3}$ / $10^{-8}$ \\
\midrule
\multirow{6}{*}{\shortstack[l]{Curvature\\(Alg.~\ref{alg:rewiring})}}
  & Precision diag.\ $\alpha$    & 0.01       \\
  & Laplacian weight $\beta$     & 1.0        \\
  & Bottleneck strength $\lambda$& 1.0        \\
  & Curvature threshold $\kappa_0$ & 0.0      \\
  & Sensitivity $\tau$           & 5.0        \\
  & Proj.\ floor $\sigma_{\min}$ & $10^{-4}$  \\
\midrule
Optimization
  & Learning Rate $\eta$           & $\{10^{-3},10^{-2}\}$ \\
\bottomrule
\end{tabular}
\end{table}

\section{Statistical Significance Test}
The statistical significance test is detailed in Table~\ref{tab:xlstm-statistical-results}.

\begin{table}[t]
\centering
\caption{Teger statistical significance over 24 rolling forecast test windows. Values are mean \(\pm\) std across windows. QL0.5,  and QL0.9 use the aggregate values reported in the existing metrics files.}
\label{tab:xlstm-statistical-results}
\small
\begin{tabular}{llcccc}
\toprule
Backbone & Dataset & CRPS-mean & CRPS-sum & QL0.5 & QL0.9 \\
\midrule
\multirow{2}{*}{xLSTM}
  & Brussels & \textbf{\(0.0239 \pm 0.0002\)} & \textbf{\(0.0619 \pm 0.0094\)} & \textbf{\(0.0141 \pm 0.0001\)} & \(0.0096 \pm 0.0001\) \\
  & pems03 & \(0.0270 \pm 0.0019\) & \(0.0341 \pm 0.0023\) & \(0.0210 \pm 0.0001\) & \(0.0362 \pm 0.0001\) \\
\midrule
\multirow{2}{*}{Transformer}
  & Brussels & \(0.0275 \pm 0.0099\) & \(0.0695 \pm 0.0108\) & \(0.0161 \pm 0.0001\) & \(0.0111 \pm 0.0001\) \\
  & pems03 & \(0.0311 \pm 0.0022\) & \(0.0316 \pm 0.0026\) & \(0.0242 \pm 0.0001\) & \(0.0416 \pm 0.0002\) \\
\bottomrule
\end{tabular}
\end{table}

\section{Brussels Reweighting Details}\label{Brussels_des}

The Brussels graph contains $6,632$ undirected edges. Balanced Forman curvature values lie in the range $[-0.2025,\; +0.4151]$ with mean $\bar{\kappa}{=}{-0.0755}$ and standard deviation $\sigma_\kappa{=}0.0565$. As expected for a proximity-based sensor graph, 95.4\% of edges have $\kappa_{ij}{<}0$: most sensor pairs lie along chain-like corridors without shared alternative sensor neighbors, which produces negative Balanced Forman curvature by construction. Of the 6\,632 edges, the model designates $663$ (10\%) as \emph{rewired}, defined as those whose bottleneck score $b_{ij}$ meets or exceeds the 90th percentile of the full edge distribution ($b \geq 2.20$); the remaining 90\% receive negligible reweighting ($W'_{ij}/W_{ij} \approx 1$). We annotate three sensor clusters in Figure~\ref{fig_casestudy}, chosen to span the range of bottleneck behavior. Node N1 corresponds to the sensor (Schaerbeek municipality), which has the highest mean incident bottleneck score of any sensor in the network ($\bar{b}{=}3.70$). Its five strongest edges all connect to neighbouring Schaerbeek sensors and achieve Balanced Forman curvature in the range $[-0.198, -0.203]$, more than two standard deviations below the graph mean, resulting in covariance reweighting of $10.9$–$11.3\times$ (Table~\ref{tab:brussels_edges}, row~E1). These sensor pairs share only 74–82 common sensor neighbors despite each node having degree~85–108. Node N2 (0.7\,km from Place Robert Schuman, $\bar{\kappa}{=}{-0.078}$). Its strongest rewired edge connects to Etterbeek sensor N2 with $\kappa{=}{-0.121}$, $W'/W{=}4.43\times$, and 63 common neighbours (Table~\ref{tab:brussels_edges}, row~E2). {Node N3} (approximately 0.6\,km from Arts-Loi junction), whose strongest rewired edge to the same Etterbeek sensor yields $\kappa{=}{-0.110}$ and $W'/W{=}3.52\times$ (row~E3). Cross-municipality edges (Brussels$\leftrightarrow$Etterbeek) appear prominently among rewired edges because the two sensor clusters share proportionally fewer common neighbours than intra-municipality pairs, lowering $\kappa$ and raising $b$. For comparison, row~R1 shows a representative non-rewired edge in the Auderghem sub-cluster: $\kappa{=}+0.415$, $|\Delta|{=}10$ (proportionally high relative to node degree), and $W'/W{=}1.00$ — the model correctly assigns zero amplification to this geometrically well-connected pair.
\begin{figure}[!t]
\centering
\begin{subfigure}[b]{0.30\linewidth}
    \includegraphics[width=\linewidth,height=2.4cm]%
    {figs/usecase/een-doorsneedag-op-het-meiserplein}
    \caption{Place Meiser (N1)}
    \label{fig5_a}
\end{subfigure}
\hfill
\begin{subfigure}[b]{0.30\linewidth}
    \includegraphics[width=\linewidth,height=2.4cm]%
    {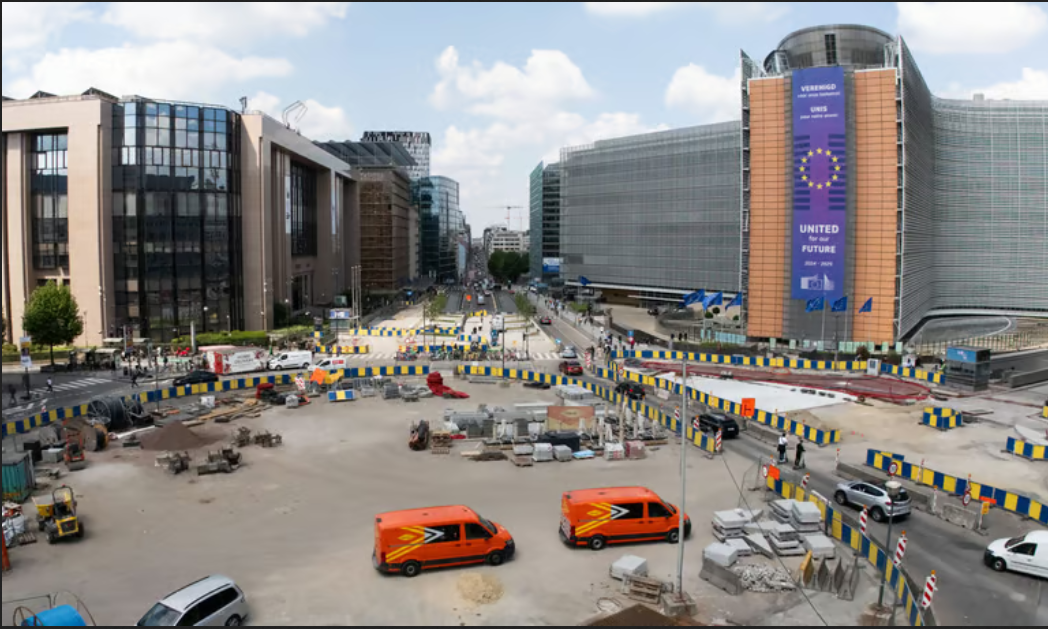}
    \caption{Place Schuman (N2)}
    \label{fig5_b}
\end{subfigure}
\hfill
\begin{subfigure}[b]{0.30\linewidth}
    \includegraphics[width=\linewidth,height=2.4cm]%
    {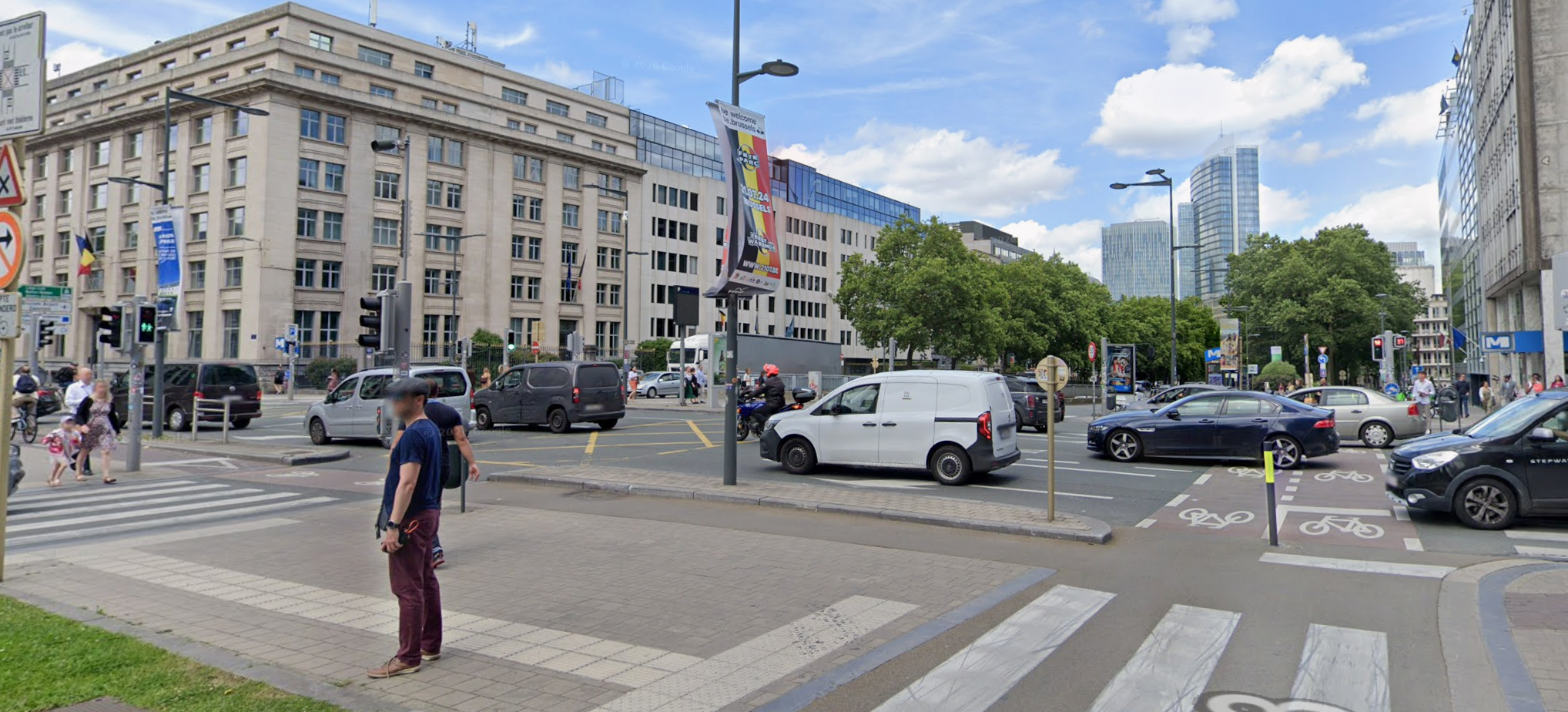}
    \caption{Arts-Loi (N3)}
    \label{fig5_c}
\end{subfigure}
\caption{An Illustrative traffic snapshot at the three annotated sensor clusters. Images are static snapshots from Google Map service and were not selected to match any specific traffic event in the dataset.}
\label{fig:snapshot}
\end{figure}
\begin{table*}[t]
\caption{Brussels edge statistics.}
\label{tab:brussels_edges}
\centering
\small
\setlength{\tabcolsep}{5pt}
\renewcommand{\arraystretch}{1.05}
\resizebox{0.7\textwidth}{!}{
\begin{tabular}{@{}llrrrrrc@{}}
\toprule
Label & Sensor pair & $\kappa$ & $W$ & $W'$ & $W'/W$ & $|\Delta|$ & Rev. \\
\midrule
\multicolumn{8}{@{}l}{\textit{Rewired edges --- strong bottleneck signal}} \\
E1 & \texttt{N1} $\leftrightarrow$ \texttt{N3} & $-0.203$ & $0.701$ & $7.923$ & $11.30\times$ & 74 & \checkmark \\
E2 & \texttt{N1} $\leftrightarrow$ \texttt{N2} & $-0.121$ & $0.689$ & $3.055$ & $4.43\times$ & 63 & \checkmark \\
E3 & \texttt{N2} $\leftrightarrow$ \texttt{N3} & $-0.110$ & $0.586$ & $2.061$ & $3.52\times$ & 59 & \checkmark \\
\midrule
\multicolumn{8}{@{}l}{\textit{Non-rewired reference edge --- well-connected cluster}} \\
R1 & --- & $+0.415$ & $1.000$ & $1.000$ & $1.00\times$ & 10 & --- \\
\bottomrule
\end{tabular}}
\end{table*}

The street imagery in Figure~\ref{fig:snapshot} provides geographic context for
clusters N1–N3 and is presented as illustrative background only. It was not used as
evidence for model correctness and cannot be independently reproduced as quantitative
validation. The graph's Euclidean-distance construction means that the edges
highlighted here reflect sensor-proximity bottlenecks; claims about specific
road-network features (tunnels, ring roads) cannot be formally substantiated
without a topology-derived graph, and we do not make such claims here.

\section{Graph construction}\label{graph_cons}
For PeMS and Brussels datasets, the adjacency matrix $A$ is the thresholded Gaussian kernel $A_{ij} = exp(-d_{ij}^2 / (2\sigma^2))$ if $A_{ij} \geq 0.1$, else 0, where $d_{ij}$ is the road network distance between sensors $i$ and $j$ and $\sigma$ is the standard deviation of all pairwise distances. For Brussels, GPS Euclidean distance is used in place of road-network distance. The resulting graphs are symmetric, connected, and held fixed across all experiments. We build the dynamic graph, consequently, as in \cite{sym17071007}. For PeMS datasets, the dynamic graph is given.

\newpage
\section*{NeurIPS Paper Checklist}

\begin{enumerate}

\item {\bf Claims}
    \item[] Question: Do the main claims made in the abstract and introduction accurately reflect the paper's contributions and scope?
    \item[] Answer: \answerYes{} 
    \item[] Justification: We clarify the contributions and the scope in the abstract and the introduction.
    \item[] Guidelines:
    \begin{itemize}
        \item The answer \answerNA{} means that the abstract and introduction do not include the claims made in the paper.
        \item The abstract and/or introduction should clearly state the claims made, including the contributions made in the paper and important assumptions and limitations. A \answerNo{} or \answerNA{} answer to this question will not be perceived well by the reviewers. 
        \item The claims made should match theoretical and experimental results, and reflect how much the results can be expected to generalize to other settings. 
        \item It is fine to include aspirational goals as motivation as long as it is clear that these goals are not attained by the paper. 
    \end{itemize}

\item {\bf Limitations}
    \item[] Question: Does the paper discuss the limitations of the work performed by the authors?
    \item[] Answer: \answerYes{} 
    \item[] Justification: It has been discussed in Section \ref{limitation}.
    \item[] Guidelines:
    \begin{itemize}
        \item The answer \answerNA{} means that the paper has no limitation while the answer \answerNo{} means that the paper has limitations, but those are not discussed in the paper. 
        \item The authors are encouraged to create a separate ``Limitations'' section in their paper.
        \item The paper should point out any strong assumptions and how robust the results are to violations of these assumptions (e.g., independence assumptions, noiseless settings, model well-specification, asymptotic approximations only holding locally). The authors should reflect on how these assumptions might be violated in practice and what the implications would be.
        \item The authors should reflect on the scope of the claims made, e.g., if the approach was only tested on a few datasets or with a few runs. In general, empirical results often depend on implicit assumptions, which should be articulated.
        \item The authors should reflect on the factors that influence the performance of the approach. For example, a facial recognition algorithm may perform poorly when image resolution is low or images are taken in low lighting. Or a speech-to-text system might not be used reliably to provide closed captions for online lectures because it fails to handle technical jargon.
        \item The authors should discuss the computational efficiency of the proposed algorithms and how they scale with dataset size.
        \item If applicable, the authors should discuss possible limitations of their approach to address problems of privacy and fairness.
        \item While the authors might fear that complete honesty about limitations might be used by reviewers as grounds for rejection, a worse outcome might be that reviewers discover limitations that aren't acknowledged in the paper. The authors should use their best judgment and recognize that individual actions in favor of transparency play an important role in developing norms that preserve the integrity of the community. Reviewers will be specifically instructed to not penalize honesty concerning limitations.
    \end{itemize}

\item {\bf Theory assumptions and proofs}
    \item[] Question: For each theoretical result, does the paper provide the full set of assumptions and a complete (and correct) proof?
    \item[] Answer: \answerYes{} 
    \item[] Justification: Theorems and Lemmas are presented in the main texts. Proofs are explained in the Appendix~\ref{proofs}.
    \item[] Guidelines:
    \begin{itemize}
        \item The answer \answerNA{} means that the paper does not include theoretical results. 
        \item All the theorems, formulas, and proofs in the paper should be numbered and cross-referenced.
        \item All assumptions should be clearly stated or referenced in the statement of any theorems.
        \item The proofs can either appear in the main paper or the supplemental material, but if they appear in the supplemental material, the authors are encouraged to provide a short proof sketch to provide intuition. 
        \item Inversely, any informal proof provided in the core of the paper should be complemented by formal proofs provided in appendix or supplemental material.
        \item Theorems and Lemmas that the proof relies upon should be properly referenced. 
    \end{itemize}

    \item {\bf Experimental result reproducibility}
    \item[] Question: Does the paper fully disclose all the information needed to reproduce the main experimental results of the paper to the extent that it affects the main claims and/or conclusions of the paper (regardless of whether the code and data are provided or not)?
    \item[] Answer: \answerYes{} 
    \item[] Justification: We provide details in Section~\ref{sec:experiments}.
    \item[] Guidelines:
    \begin{itemize}
        \item The answer \answerNA{} means that the paper does not include experiments.
        \item If the paper includes experiments, a \answerNo{} answer to this question will not be perceived well by the reviewers: Making the paper reproducible is important, regardless of whether the code and data are provided or not.
        \item If the contribution is a dataset and\slash or model, the authors should describe the steps taken to make their results reproducible or verifiable. 
        \item Depending on the contribution, reproducibility can be accomplished in various ways. For example, if the contribution is a novel architecture, describing the architecture fully might suffice, or if the contribution is a specific model and empirical evaluation, it may be necessary to either make it possible for others to replicate the model with the same dataset, or provide access to the model. In general. releasing code and data is often one good way to accomplish this, but reproducibility can also be provided via detailed instructions for how to replicate the results, access to a hosted model (e.g., in the case of a large language model), releasing of a model checkpoint, or other means that are appropriate to the research performed.
        \item While NeurIPS does not require releasing code, the conference does require all submissions to provide some reasonable avenue for reproducibility, which may depend on the nature of the contribution. For example
        \begin{enumerate}
            \item If the contribution is primarily a new algorithm, the paper should make it clear how to reproduce that algorithm.
            \item If the contribution is primarily a new model architecture, the paper should describe the architecture clearly and fully.
            \item If the contribution is a new model (e.g., a large language model), then there should either be a way to access this model for reproducing the results or a way to reproduce the model (e.g., with an open-source dataset or instructions for how to construct the dataset).
            \item We recognize that reproducibility may be tricky in some cases, in which case authors are welcome to describe the particular way they provide for reproducibility. In the case of closed-source models, it may be that access to the model is limited in some way (e.g., to registered users), but it should be possible for other researchers to have some path to reproducing or verifying the results.
        \end{enumerate}
    \end{itemize}

\item {\bf Open access to data and code}
    \item[] Question: Does the paper provide open access to the data and code, with sufficient instructions to faithfully reproduce the main experimental results, as described in supplemental material?
    \item[] Answer: \answerYes{} 
    \item[] Justification: We provide the data and code in supplementary materials. We are not allowed to publish the dataset for Brussels traffic because it is private. However, the scripts to reproduce the results are uploaded.
    \item[] Guidelines:
    \begin{itemize}
        \item The answer \answerNA{} means that paper does not include experiments requiring code.
        \item Please see the NeurIPS code and data submission guidelines (\url{https://neurips.cc/public/guides/CodeSubmissionPolicy}) for more details.
        \item While we encourage the release of code and data, we understand that this might not be possible, so \answerNo{} is an acceptable answer. Papers cannot be rejected simply for not including code, unless this is central to the contribution (e.g., for a new open-source benchmark).
        \item The instructions should contain the exact command and environment needed to run to reproduce the results. See the NeurIPS code and data submission guidelines (\url{https://neurips.cc/public/guides/CodeSubmissionPolicy}) for more details.
        \item The authors should provide instructions on data access and preparation, including how to access the raw data, preprocessed data, intermediate data, and generated data, etc.
        \item The authors should provide scripts to reproduce all experimental results for the new proposed method and baselines. If only a subset of experiments are reproducible, they should state which ones are omitted from the script and why.
        \item At submission time, to preserve anonymity, the authors should release anonymized versions (if applicable).
        \item Providing as much information as possible in supplemental material (appended to the paper) is recommended, but including URLs to data and code is permitted.
    \end{itemize}

\item {\bf Experimental setting/details}
    \item[] Question: Does the paper specify all the training and test details (e.g., data splits, hyperparameters, how they were chosen, type of optimizer) necessary to understand the results?
    \item[] Answer: \answerYes{} 
    \item[] Justification: We provide details in Section~\ref{sec:experiments}.
    \item[] Guidelines:
    \begin{itemize}
        \item The answer \answerNA{} means that the paper does not include experiments.
        \item The experimental setting should be presented in the core of the paper to a level of detail that is necessary to appreciate the results and make sense of them.
        \item The full details can be provided either with the code, in appendix, or as supplemental material.
    \end{itemize}

\item {\bf Experiment statistical significance}
    \item[] Question: Does the paper report error bars suitably and correctly defined or other appropriate information about the statistical significance of the experiments?
    \item[] Answer: \answerYes{} 
    \item[] Justification: We provide details in Section~\ref{sec:experiments}.
    \item[] Guidelines:
    \begin{itemize}
        \item The answer \answerNA{} means that the paper does not include experiments.
        \item The authors should answer \answerYes{} if the results are accompanied by error bars, confidence intervals, or statistical significance tests, at least for the experiments that support the main claims of the paper.
        \item The factors of variability that the error bars are capturing should be clearly stated (for example, train/test split, initialization, random drawing of some parameter, or overall run with given experimental conditions).
        \item The method for calculating the error bars should be explained (closed form formula, call to a library function, bootstrap, etc.)
        \item The assumptions made should be given (e.g., Normally distributed errors).
        \item It should be clear whether the error bar is the standard deviation or the standard error of the mean.
        \item It is OK to report 1-sigma error bars, but one should state it. The authors should preferably report a 2-sigma error bar than state that they have a 96\% CI, if the hypothesis of Normality of errors is not verified.
        \item For asymmetric distributions, the authors should be careful not to show in tables or figures symmetric error bars that would yield results that are out of range (e.g., negative error rates).
        \item If error bars are reported in tables or plots, the authors should explain in the text how they were calculated and reference the corresponding figures or tables in the text.
    \end{itemize}

\item {\bf Experiments compute resources}
    \item[] Question: For each experiment, does the paper provide sufficient information on the computer resources (type of compute workers, memory, time of execution) needed to reproduce the experiments?
    \item[] Answer: \answerYes{} 
    \item[] Justification: We provide details in Section~\ref{sec:experiments}.
    \item[] Guidelines:
    \begin{itemize}
        \item The answer \answerNA{} means that the paper does not include experiments.
        \item The paper should indicate the type of compute workers CPU or GPU, internal cluster, or cloud provider, including relevant memory and storage.
        \item The paper should provide the amount of compute required for each of the individual experimental runs as well as estimate the total compute. 
        \item The paper should disclose whether the full research project required more compute than the experiments reported in the paper (e.g., preliminary or failed experiments that didn't make it into the paper). 
    \end{itemize}
    
\item {\bf Code of ethics}
    \item[] Question: Does the research conducted in the paper conform, in every respect, with the NeurIPS Code of Ethics \url{https://neurips.cc/public/EthicsGuidelines}?
    \item[] Answer: \answerYes{} 
    \item[] Justification: This paper conforms, in every respect, with the NeurIPS Code of Ethics.
    \item[] Guidelines:
    \begin{itemize}
        \item The answer \answerNA{} means that the authors have not reviewed the NeurIPS Code of Ethics.
        \item If the authors answer \answerNo, they should explain the special circumstances that require a deviation from the Code of Ethics.
        \item The authors should make sure to preserve anonymity (e.g., if there is a special consideration due to laws or regulations in their jurisdiction).
    \end{itemize}

\item {\bf Broader impacts}
    \item[] Question: Does the paper discuss both potential positive societal impacts and negative societal impacts of the work performed?
    \item[] Answer: \answerNA{} 
    \item[] Justification: We think our work will not have a significant social impact.
    \item[] Guidelines:
    \begin{itemize}
        \item The answer \answerNA{} means that there is no societal impact of the work performed.
        \item If the authors answer \answerNA{} or \answerNo, they should explain why their work has no societal impact or why the paper does not address societal impact.
        \item Examples of negative societal impacts include potential malicious or unintended uses (e.g., disinformation, generating fake profiles, surveillance), fairness considerations (e.g., deployment of technologies that could make decisions that unfairly impact specific groups), privacy considerations, and security considerations.
        \item The conference expects that many papers will be foundational research and not tied to particular applications, let alone deployments. However, if there is a direct path to any negative applications, the authors should point it out. For example, it is legitimate to point out that an improvement in the quality of generative models could be used to generate Deepfakes for disinformation. On the other hand, it is not needed to point out that a generic algorithm for optimizing neural networks could enable people to train models that generate Deepfakes faster.
        \item The authors should consider possible harms that could arise when the technology is being used as intended and functioning correctly, harms that could arise when the technology is being used as intended but gives incorrect results, and harms following from (intentional or unintentional) misuse of the technology.
        \item If there are negative societal impacts, the authors could also discuss possible mitigation strategies (e.g., gated release of models, providing defenses in addition to attacks, mechanisms for monitoring misuse, mechanisms to monitor how a system learns from feedback over time, improving the efficiency and accessibility of ML).
    \end{itemize}
    
\item {\bf Safeguards}
    \item[] Question: Does the paper describe safeguards that have been put in place for responsible release of data or models that have a high risk for misuse (e.g., pre-trained language models, image generators, or scraped datasets)?
    \item[] Answer: \answerNA{} 
    \item[] Justification: The paper poses no such risks.
    \item[] Guidelines:
    \begin{itemize}
        \item The answer \answerNA{} means that the paper poses no such risks.
        \item Released models that have a high risk for misuse or dual-use should be released with necessary safeguards to allow for controlled use of the model, for example by requiring that users adhere to usage guidelines or restrictions to access the model or implementing safety filters. 
        \item Datasets that have been scraped from the Internet could pose safety risks. The authors should describe how they avoided releasing unsafe images.
        \item We recognize that providing effective safeguards is challenging, and many papers do not require this, but we encourage authors to take this into account and make a best faith effort.
    \end{itemize}

\item {\bf Licenses for existing assets}
    \item[] Question: Are the creators or original owners of assets (e.g., code, data, models), used in the paper, properly credited and are the license and terms of use explicitly mentioned and properly respected?
    \item[] Answer: \answerYes{} 
    \item[] Justification: All assets used in this paper are properly credited.
    \item[] Guidelines:
    \begin{itemize}
        \item The answer \answerNA{} means that the paper does not use existing assets.
        \item The authors should cite the original paper that produced the code package or dataset.
        \item The authors should state which version of the asset is used and, if possible, include a URL.
        \item The name of the license (e.g., CC-BY 4.0) should be included for each asset.
        \item For scraped data from a particular source (e.g., website), the copyright and terms of service of that source should be provided.
        \item If assets are released, the license, copyright information, and terms of use in the package should be provided. For popular datasets, \url{paperswithcode.com/datasets} has curated licenses for some datasets. Their licensing guide can help determine the license of a dataset.
        \item For existing datasets that are re-packaged, both the original license and the license of the derived asset (if it has changed) should be provided.
        \item If this information is not available online, the authors are encouraged to reach out to the asset's creators.
    \end{itemize}

\item {\bf New assets}
    \item[] Question: Are new assets introduced in the paper well documented and is the documentation provided alongside the assets?
    \item[] Answer: \answerYes{} 
    \item[] Justification: We provide the documentation in the code repo.
    \item[] Guidelines:
    \begin{itemize}
        \item The answer \answerNA{} means that the paper does not release new assets.
        \item Researchers should communicate the details of the dataset\slash code\slash model as part of their submissions via structured templates. This includes details about training, license, limitations, etc. 
        \item The paper should discuss whether and how consent was obtained from people whose asset is used.
        \item At submission time, remember to anonymize your assets (if applicable). You can either create an anonymized URL or include an anonymized zip file.
    \end{itemize}

\item {\bf Crowdsourcing and research with human subjects}
    \item[] Question: For crowdsourcing experiments and research with human subjects, does the paper include the full text of instructions given to participants and screenshots, if applicable, as well as details about compensation (if any)? 
    \item[] Answer: \answerNA{} 
    \item[] Justification: The paper does not involve crowdsourcing nor research with human subjects.
    \item[] Guidelines:
    \begin{itemize}
        \item The answer \answerNA{} means that the paper does not involve crowdsourcing nor research with human subjects.
        \item Including this information in the supplemental material is fine, but if the main contribution of the paper involves human subjects, then as much detail as possible should be included in the main paper. 
        \item According to the NeurIPS Code of Ethics, workers involved in data collection, curation, or other labor should be paid at least the minimum wage in the country of the data collector. 
    \end{itemize}

\item {\bf Institutional review board (IRB) approvals or equivalent for research with human subjects}
    \item[] Question: Does the paper describe potential risks incurred by study participants, whether such risks were disclosed to the subjects, and whether Institutional Review Board (IRB) approvals (or an equivalent approval/review based on the requirements of your country or institution) were obtained?
    \item[] Answer: \answerNA{} 
    \item[] Justification: The paper does not involve crowdsourcing nor research with human subjects.
    \item[] Guidelines:
    \begin{itemize}
        \item The answer \answerNA{} means that the paper does not involve crowdsourcing nor research with human subjects.
        \item Depending on the country in which research is conducted, IRB approval (or equivalent) may be required for any human subjects research. If you obtained IRB approval, you should clearly state this in the paper. 
        \item We recognize that the procedures for this may vary significantly between institutions and locations, and we expect authors to adhere to the NeurIPS Code of Ethics and the guidelines for their institution. 
        \item For initial submissions, do not include any information that would break anonymity (if applicable), such as the institution conducting the review.
    \end{itemize}

\item {\bf Declaration of LLM usage}
    \item[] Question: Does the paper describe the usage of LLMs if it is an important, original, or non-standard component of the core methods in this research? Note that if the LLM is used only for writing, editing, or formatting purposes and does \emph{not} impact the core methodology, scientific rigor, or originality of the research, declaration is not required.
    \item[] Answer: \answerNA{} 
    \item[] Justification: The core method development in this research does not involve LLMs as any important, original, or non-standard components.
    \item[] Guidelines:
    \begin{itemize}
        \item The answer \answerNA{} means that the core method development in this research does not involve LLMs as any important, original, or non-standard components.
        \item Please refer to our LLM policy in the NeurIPS handbook for what should or should not be described.
    \end{itemize}

\end{enumerate}

\end{document}